
\documentclass{article}
\usepackage{svg}

\usepackage{microtype}
\usepackage{graphicx}
\usepackage{subfigure}
\usepackage{booktabs} 

\usepackage{hyperref}


\usepackage[accepted]{icml2025}


\usepackage{mymacro}
\usepackage{lipsum}

\usepackage[capitalize,noabbrev]{cleveref}

\theoremstyle{plain}
\newtheorem{theorem}{Theorem}[section]
\newtheorem{proposition}[theorem]{Proposition}
\newtheorem{lemma}[theorem]{Lemma}

\theoremstyle{definition}
\newtheorem{definition}[theorem]{Definition}
\newtheorem{assumption}[theorem]{Assumption}
\theoremstyle{remark}
\newtheorem{remark}[theorem]{Remark}

\usepackage[textsize=tiny]{todonotes}

\icmltitlerunning{Accelerating Differentially Private Federated Learning via Adaptive Extrapolation}

\begin{document}

\twocolumn[
    \icmltitle{Accelerating Differentially Private Federated Learning \\ via Adaptive Extrapolation}



    \icmlsetsymbol{equal}{*}

    \begin{icmlauthorlist}
        \icmlauthor{Shokichi Takakura}{comp}
        \icmlauthor{Seng Pei Liew}{comp}
        \icmlauthor{Satoshi Hasegawa}{comp}
    \end{icmlauthorlist}

    \icmlaffiliation{comp}{LY Corporation, Tokyo, Japan}

    \icmlcorrespondingauthor{Shokichi Takakura}{stakakur@lycorp.co.jp}

    \icmlkeywords{Machine Learning, ICML}

    \vskip 0.3in
]



\printAffiliationsAndNotice{} 

\begin{abstract}
    The federated learning (FL) framework enables multiple clients to collaboratively train machine learning models without sharing their raw data,
    but it remains vulnerable to privacy attacks.
    One promising approach is to incorporate differential privacy (DP)—a formal notion of privacy—into the FL framework.
    DP-FedAvg is one of the most popular algorithms for DP-FL,
    but it is known to suffer from the slow convergence in the presence of heterogeneity among clients' data.
    Most of the existing methods to accelerate DP-FL require 1) additional hyperparameters or 2) additional computational cost for clients,
    which is not desirable since 1) hyperparameter tuning is computationally expensive and data-dependent choice of hyperparameters raises the risk of privacy leakage,
    and 2) clients are often resource-constrained.
    To address this issue, we propose DP-FedEXP, which adaptively selects the global step size based on the diversity of the local updates
    without requiring any additional hyperparameters or client computational cost.
    We show that DP-FedEXP provably accelerates the convergence of DP-FedAvg and
    it empirically outperforms existing methods tailored for DP-FL.
\end{abstract}

\section{Introduction}
Federated learning (FL)~\citep{konečný2017federated} is a distributed machine learning framework
where multiple clients collaboratively train a global model without sharing their raw data.
FL has been widely adopted in various applications, such as mobile devices, edge devices, and healthcare systems
where data is sensitive and cannot be shared due to privacy concerns~\citep{kairouz2021advances,xu2023federated}.
Due to its simplicity, stateless property, and communication efficiency,
FedAvg~\citep{mcmahan2017communication} is one of the most popular FL algorithms.
In FedAvg, the server sends the global model to the clients, and each client performs a multiple-step local training using stochastic gradient descent (SGD)
to reduce the communication cost. Then, the clients send the local updates to the server and the server aggregates the updates by averaging them.
Although FL algorithms are intended to protect the privacy of clients,
several works have shown that there is a potential leakage of privacy from local updates~\citep{lam2021gradient,geiping2020inverting,nasr2019comprehensive,zhao2024federated}.
For example,~\citet{lam2021gradient} has shown that an attacker can recover privileged information from aggregated model updates in FL.
Taking into account the growing concern for privacy in the field of machine learning,
incorporating formal privacy guarantees into FL is a crucial and fundamental challenge.

One promising approach to tackle the privacy issue in FL is to add noise to the updates of the model to ensure differential privacy (DP)~\citep{dwork2006calibrating},
which is a general and mathematically rigorous notion to quantify the degree of privacy protection.
A practical approach to incorporate DP to the FL framework is DP-FedAvg~\citep{mcmahan2017learning}, which is a DP extension of FedAvg.
Unfortunately, (DP-)FedAvg has been known to suffer from slow convergence in the presence of data heterogeneity across clients.
This issue is known as \textit{ the client drift error}~\citep{karimireddy2019error}.
The effect of the client drift error becomes more severe when only a subset of all clients participate in each training round~\citep{kairouz2021advances}.

To deal with data heterogeneity, a line of work has studied variance reduction techniques in (non-private) FL setting~\citep{karimireddy2020mime,karimireddy2020scaffold,mitra2021linear}.
Extending the above techniques to the DP setting, DP-SCAFFOLD~\citep{noble2022differentially} has been proposed and shown to achieve improved convergence guarantee.
Although the above methods enjoy theoretically favorable properties, they require clients to be stateful and additional computational cost in clients.
This is impractical since clients are often resource-constrained.

Another line of work has sought to accelerate the convergence of (DP-)FedAvg by regarding the local updates as pseudo-gradients
and updating the global model using global optimization algorithms such as Adam~\citep{kingma2015adam} with additional hyperparameters such as global step size~\citep{reddi2021adaptive}.
Although the performance crucially relies on the choice of the hyperparameters,
it is difficult to obtain the optimal hyperparameters in the DP settings
since hyperparameter tuning on sensitive data leads to additional privacy leakage~\cite{papernot2021hyperparameter}.
Furthermore, it is highly costly in practice to tune the hyperparameters in the FL setting,
since the data is distributed across clients.

To develop an effective and practical DP-FL algorithm, we pose the following question:

\textit{Can DP-FL be accelerated under heterogeneity of client data without any additional hyperparameters and computational cost for clients?}

In this paper, to address the above question, we propose DP-FedEXP by incorporating FedEXP~\citep{jhunjhunwala2023fedexp},
which adaptively determines the global step size to the heterogeneity of the local updates, into the DP-FL framework in a non-trivial way.
Specifically, we consider the two different scenarios of DP: Local Differential Privacy (LDP) and Central Differential Privacy (CDP).
We found that the step size formula for FedEXP cannot be directly extended in both cases.
Thus, we carefully design the step size formula for LDP and CDP and develop a simple but effective framework to accelerate the convergence of existing DP-FL algorithms.
We would like to emphasize that our proposed method is \textit{orthogonal} to existing works
which try to accelerate (DP-)FL by modifying the local training procedure~\citep{li2020federated,karimireddy2020scaffold,noble2022differentially,shi2023make}
and thus, it can be combined with them to further improve the performance.

Our contribution can be summarized as follows:
\begin{itemize}
    \item We propose LDP-FedEXP and CDP-FedEXP with simple but effective parameter-free step size rules in DP-FL.
    \item We provide formal differential privacy guarantee and convergence guarantees for general non-convex objectives.
          We prove that the proposed method provably accelerates the convergence in the presence of data heterogeneity.
    \item In the numerical experiments, we show that DP-FedEXP outperforms existing algorithms in utility while preserving the privacy guarantee.
\end{itemize}



\subsection{Other Related Work}


\paragraph{Adapive Optimization Algorithms with DP}
Inspired by the success of adaptive optimization algorithms such as Adam~\citep{kingma2015adam} in the non-private setting,
their DP variants have been utilized in various fields~\citep{li2021large,daigavane2022node}.
However, despite their success in the non-private setting,
their DP variants often suffer from the slow convergence.
\citet{tang2024dp} have found that the bias from DP noise degrades the performance of DP-Adam
and proposed DP-AdamBC, which removes the bias in the second moment estimation of Adam update.
This implies that it is not straightforward to extend adaptive methods in the non-private setting to the DP setting.
Note that the above attempts are mainly focused on the centralized setting and
it is still unclear how to incorporate the adaptivity to the heterogeneity of the client data into DP-FL algorithms.

\paragraph{Hyperparameter Tuning with DP}
In the most of the existing works, the privacy leakage from hyperparameter tuning is ignored.
However, as discussed in~\citet{papernot2021hyperparameter}, hyperparameters can raise the privacy risks of memorizing the training data.
Several works~\citep{liu2019private,wang2023dp,papernot2021hyperparameter,mohapatra2022role} have proposed to privatize hyperparameter tuning by consuming additional privacy budget.
However, these methods often result in much weaker privacy guarantees unless larger DP noise is used.
For example,~\citet{papernot2021hyperparameter} have reported that the privacy parameter can be doubled or even tripled by accounting the privacy leakage from hyperparameter tuning.
Furthermore, it is prohibitively expensive or even infeasible
to conduct hyperparameter tuning with distributed data in the FL setting.
\vspace{-0.5cm}
\paragraph{Hyperparameter-Free DP Optimization}
A line of work has investigated adaptive methods to select hyperparameters for DP optimization algorithms~\citep{andrew2021differentially,bu2024automatic,anonymous2024towards}.
For example, Adaptive clipping~\citep{andrew2021differentially} selects clipping threshold in DP-FL by estimating a quantile of the update norm with a negligible amount of privacy budget.
Furthermore,~\citet{anonymous2024towards} have proposed a hyperparameter-free algorithm for DP optimization in the centralized setting.
However, to the best of our knowledge, there is no work that provides hyperparameter-free step size rule to deal with the heterogeneity of the client data for DP-FL.

\section{Problem Settings and Preliminaries}
In this section, we introduce the problem settings of federated learning~\citep{konečný2017federated} and the notion of differential privacy~\citep{dwork2006calibrating}.
We also review  previous works and the motivation of our proposed method.

\subsection{Federated Learning}
In this paper, we consider a federated learning setting where there are a central server and $M$ clients, which have their own local datasets with sensitive information.
The objective is to minimize the following empirical risk:
\begin{align}
    \min_{w \in \mathbb{R}^d} F(w) := \frac{1}{M} \sum_{i=1}^M f_i(w), \label{eq:problem}
\end{align}
where $w \in \R^d$ is the parameter of the model, $M$ is the number of clients and $f_i(w) := \frac{1}{\abs{\mathcal{D}_i}} \sum_{d_i \in \mathcal{D}_i} l(w, d_i)$ is the loss function of the $i$-th client computed on a loss function $l$ and the local dataset $\mathcal{D}_i$.

\subsection{Differential Privacy}
In this paper, we consider two scenarios of differential privacy: Central Differential Privacy (CDP) and Local Differential Privacy (LDP) .
In the CDP setting, we assume that the central server is trusted and provide the privacy guarantee to the attackers who can access only the updated model.
On the other hand, in the LDP setting, we do not assume any trusted server and provide the privacy guarantee to the attackers who can access the local updates.
Since LDP does not assume the trusted server, it is more challenging to achieve the privacy guarantee than in the CDP setting while maintaining the utility.
Here, we provide the formal definitions of $(\varepsilon, \delta)$-CDP and $(\varepsilon, \delta)$-LDP.

\begin{definition}[Central Differential Privacy~\cite{dwork2014algorithmic}]
    Let $\mathcal{X}$ be the set of all possible client datasets.
    A central randomized mechanism $\mathcal{Q}: \mathcal{X}^M \to \mathcal{Y}$ satisfies $(\varepsilon, \delta)$-CDP if
    for any two neighboring inputs $x, x' \in \mathcal{X}^M$, which differ in one client dataset,
    we have
    \begin{align*}
        \forall S \subset \mathcal{Y}: \Pr[\mathcal{Q}(x) \in S] \leq e^\varepsilon \Pr[\mathcal{Q}(x') \in S] + \delta.
    \end{align*}
\end{definition}
\begin{definition}[Local Differential Privacy~\cite{kasiviswanathan2011can}]
    Let $\mathcal{X}$ be the set of all possible client datasets.
    A local randomized mechanism $\mathcal{R}: \mathcal{X} \to \mathcal{Y}$ satisfies $(\varepsilon, \delta)$-LDP if for any two inputs $x, x' \in \mathcal{X}$,
    we have
    \begin{align*}
        \forall S \subset \mathcal{Y}: \Pr[\mathcal{R}(x) \in S] \leq e^\varepsilon \Pr[\mathcal{R}(x') \in S] + \delta.
    \end{align*}
    If $\delta = 0$, $\mathcal{R}$ is called to satisfy \textit{pure differential privacy}.
\end{definition}
In the above definitions, we employ \textit{client-level} DP, which protects whole dataset for each client.
This is a stronger notion of privacy compared to \textit{sample-level} DP, which protects each sample in clients' datasets.
Client-level DP is suitable for the FL setting with a large number of clients such as mobile devices and edge devices.
\subsection{DP-FedAvg}
DP-FedAvg~\citep{mcmahan2017learning} is one of the most popular algorithms for federated learning with differential privacy
due to its simplicity and communication efficiency.
At round $t$, the server sends the global model $w^{(t - 1)}$ to all clients.
Then, each client performs $\tau$ steps of local training $w_i^{(t - 1, 0)} := w^{(t - 1)}, w_i^{(t - 1, k)} := w_i^{(t - 1, k - 1)} - \eta_l \grad f_i(w_i^{(t - 1, k - 1)})~(k=1 \dots \tau)$ using (stochastic) gradient descent with step size $\eta_l$
as in Algorithm~\ref{alg:localupdate}
and computes the local update $\tilde \Delta_i^{(t)} := w_i^{(t - 1, \tau)} - w^{(t - 1)}$.
To bound the sensitivity of the local updates, each client $i$ applies clipping to their local update $\Delta_i^{(t)} := \min\{C / \|\tilde \Delta_i^{(t)}\|, 1\} \cdot \tilde \Delta_i^{(t)}$ with threshold $C > 0$.
Then, each client sends the central server the local update $\Delta_i^{(t)}$ in the CDP setting and the randomized update $c_i^{(t)} := \localrandomizer(\Delta^{(t)}_i)$ in the LDP setting.
The central server aggregates the local updates as follows:
\begin{align*}
    \begin{cases}
        \bar c^{(t)} & := \frac{1}{M} \sum_{i=1}^M c_i^{(t)} \quad (\text{LDP setting}),                          \\
        \bar c^{(t)} & := \frac{1}{M} \sum_{i=1}^M \Delta_i^{(t)} + \varepsilon^{(t)} \quad (\text{CDP setting}),
    \end{cases}
\end{align*}
where $\varepsilon^{(t)}$ follows Gaussian $\mathcal{N}(0, \sigma^2 / M)$.

A natural choice of $\localrandomizer$ is Gaussian mechanism, which adds Gaussian noise to the local updates as $c^{(t)}_i = \Delta^{(t)}_i + \varepsilon^{(t)}_i$ for $\varepsilon^{(t)}_i \sim \mathcal{N}(0, \sigma^2)$.
However, Gaussian mechanism does not satisfy pure differential privacy.
\textit{PrivUnit}~\citep{bhowmick2018protection} is known as a local randomizer which satisfies the pure differential privacy.
Moreover, PrivUnit achieves the asymptotically optimal trade-off between privacy and utility~\cite{bhowmick2018protection,asi2022optimal}.
In this paper, we consider both Gaussian mechanism and PrivUnit as a local randomizer in the LDP setting
and prove the privacy and convergence guarantees in Section~\ref{sec:utility}.

For PrivUnit, we follow the procedure in \citet{bhowmick2018protection} and
privatize the norm and the direction of the local update separately.
That is, we randomize the local update $\Delta_i^{(t)}$ as follows:
\begin{align*}
    c_i^{(t)} = \hat r_i^{(t)} \cdot z_i^{(t)},
\end{align*}
where $z_i^{(t)} := \privunit\ab(\Delta_i^{(t)} / \|\Delta_i^{(t)}\|;\varepsilon_0, \varepsilon_1)$, $\hat r_i^{(t)} :=  \scalardp\ab(\|\Delta_i^{(t)}\|; \varepsilon_2)$,
and $\varepsilon_0, \varepsilon_1, \varepsilon_2$ are privacy parameters.
Here, $\privunit$ privatizes the direction and $\scalardp$ privatizes the norm. See Algorithm~\ref{alg:privunit} and~\ref{alg:scalardp} for the detailed procedure.
As shown in~\citet{bhowmick2018protection}, $c_i^{(t)}$ is an unbiased estimator of $\Delta_i^{(t)}$ and its variance is bounded by $O(dC^2 \cdot (\frac{1}{\varepsilon_1} \vee \frac{1}{(e^{\varepsilon_1} - 1)^2}))$ if $\varepsilon_1 \in (0, d)$ and $\varepsilon_2 = \Omega(1)$.
We define $\sigma^2 := C^2 \cdot (\frac{1}{\varepsilon_1} \vee \frac{1}{(e^{\varepsilon_1} - 1)^2})$ for the PrivUnit case to ensure the consistency in the notation with the Gaussian mechanism case, where the variance of $c_i^{(t)}$ is given by $d\sigma^2$.

In DP-FedAvg, the server updates the global model by just adding the averaged local update
as $w^{(t + 1)} = w^{(t)} + \bar c^{(t)}$.
To accelerate the convergence, several works deal with the noisy local updates as the pseudo-gradients and update the global model using the global learning rate~\citep{reddi2021adaptive,noble2022differentially}.
That is, the global model is updated as
\begin{align*}
    w^{(t + 1)} = w^{(t)} + \eta_g \bar c^{(t)},
\end{align*}
where $\eta_g$ is a global step size. Note that $\eta_g = 1$ recovers DP-FedAvg.
To ensure the convergence, $\eta_g$ should be chosen carefully.
Previous works have discussed the optimal global step size~\citep{zhang2022understanding} but
it is difficult in practice to tune such a hyperparameter with formal DP guarantee since hyperparameter tuning is computationally expensive and requires additional privacy budget~\citep{papernot2021hyperparameter}.
To fill the gap between the theory and practice, it is desirable to determine the step size \textit{in an adaptive manner}.

\subsection{FedEXP} \label{sec:proposed-method}
In the context of non-DP federated learning, FedEXP~\citep{jhunjhunwala2023fedexp} and FedEXProx~\citep{li2024the} have been proposed to determine the global step size adaptively to the heterogeneity of the local updates.
Their key idea is the analogy between FedAvg and Projection Onto Convex Sets (POCS) algorithm in the overparameterized convex regime.
Following the adaptive step size rule of POCS~\citep{pierra2011decomposition}, they define the global step size as
\begin{align}
    \eta_g^{(t)} := \frac{\frac{1}{M}\sum_{i=1}^M \norm{\Delta_i^{(t)}}^2}{\norm{\bar \Delta^{(t)}}^2}, \label{eq:step-size-fedexp}
\end{align}
where $\bar \Delta^{(t)} = \frac{1}{M} \sum_{i=1}^M \Delta_i^{(t)}$ is the average of the local updates.
Here, we follow the formula in~\citet{li2024the} and omit the coefficient $1/2$ and a small constant added to the denominator, which appear in~\citet{jhunjhunwala2023fedexp} since the convergence analysis in~\citet{jhunjhunwala2023fedexp} does not require these factors.
In the case of $\tau = 1$, the above formula is reduced to $\frac{\frac{1}{M}\sum_{i=1}^M \norm{\grad f_i(w^{(t)})}^2}{\norm{\grad F(w^{(t)})}^2}$,
which is known as a measure of the heterogeneity among clients~\citep{haddadpour2019convergence,wang2020tackling}.
Thus, FedEXP adaptively determines the global step size based on the diversity of the clients' data.
Although FedEXP has been shown to accelerate the convergence in the non-private setting,
it is still unclear how to extend the algorithm to the DP setting.

\section{Proposed Method: DP-FedEXP}

In this section, we propose DP-FedEXP (LDP-FedEXP and CDP-FedEXP), which extend FedEXP to the LDP and CDP setting in a non-trivial way.

\subsection{LDP-FedEXP}
\subsubsection{Naive Implementation of FedEXP with Noisy Updates}
In the setting of LDP, the server can only access the noisy updates $c_i^{(t)}$.
Extending Eq.~\eqref{eq:step-size-fedexp} to the DP setting naively, we obtain the following formula:
\begin{align}
    \tilde \eta_g^{(t)} := \frac{\frac{1}{M}\sum_{i=1}^M \norm{c^{(t)}_i}^2}{\norm{\bar c^{(t)}}^2}. \label{eq:naive}
\end{align}
Unfortunately, as shown in Fig.~\ref{fig:eta_g}, $\tilde \eta_g^{(t)}$ tends to be extremely large and cause instability in the training process.

For simplicity, we focus on the case where the local randomizer is Gaussian mechanism.
To investigate the reason of this phenomenon, let us evaluate the expectation of the numerator in the above formula.
We have
\begin{align*}
    \Expec{\frac{1}{M}\sum_{i=1}^M \norm{c^{(t)}_i}^2} = \frac{1}{M}\sum_{i=1}^M \norm{\Delta^{(t)}_i}^2 + d\sigma^2.
\end{align*}
Since the noise scale $\sigma$ is relatively large in the LDP setting,
the noise term $d\sigma^2$ dominates the numerator.
Furthermore, since the noise term does not depend on the number of clients $M$,
increasing the number of clients does not help to stabilize the training.

\subsubsection{Step Size Formula for Gaussian Mechanism}
To develop a practical step size rule in the DP setting,
let us consider the following \textit{approximate projection condition}:
\begin{align}
    \frac{1}{M} \sum_{i=1}^M \norm{w_i^{(t, \tau)} - w^*}^2 = (1 - \alpha) \norm{w^{(t)} - w^*}^2, \label{eq:proj-cond}
\end{align}
for some $0 \leq \alpha \leq 1$~\citep{jhunjhunwala2023fedexp}, where $w^*$ is a optimal solution of problem~\eqref{eq:problem}. Intuitively, this condition implies that the parameters of the local models are closer to the optimal solution on average after $\tau$ steps of local training.
Note that we consider the condition to motivate our proposed step size, and we prove the convergence guarantee under much milder conditions in Section~\ref{sec:utility}.
Under the above condition, the distance between updated model and the optimal model is evaluated as
\begin{align*}
     & \norm{w^{(t + 1)} - w^*}^2 \simeq (1 - \alpha \eta_g)\norm{w^{(t)} - w^*}^2                       \\
     & \quad - \eta_g \frac{1}{M} \sum_{i=1}^M \norm{\Delta^{(t)}_i}^2 + \eta_g^2 \norm{\bar c^{(t)}}^2,
\end{align*}
for sufficiently large $d$ with high-probability. Here, we ignore the effect of clipping for simplicity.
See Lemma~\ref{lemma:decrease} for the detailed derivation.
To ensure that the distance between the global model and the optimal model decreases for any $\norm{w^{(t)} - w^*}^2$,
we need to set the global step size as
\begin{align}
    \eta_g \leq \eta^{(t)}_{\mathrm{target}} := \frac{\frac{1}{M} \sum_{i=1}^M \norm{\Delta^{(t)}_i}^2}{\norm{\bar c^{(t)}}^2} \label{eq:optimal-eta-g}
\end{align}
but we cannot compute $\eta^{(t)}_{\mathrm{target}}$ since the server cannot access $\Delta_i^{(t)}$ directly.
Instead of the exact calculation of $\frac{1}{M}\sum_{i=1}^M \|\Delta^{(t)}_i\|^2$, we propose to use its unbiased estimator $\frac{1}{M}\sum_{i=1}^M \|c^{(t)}_i\|^2 - d\sigma^2$.
That is, the global step size for LDP-FedEXP is given by
\begin{align}
    \eta_g^{(t)} := \max\ab\{1, \frac{\frac{1}{M}\sum_{i=1}^M \norm{c^{(t)}_i}^2 - d \sigma^2}{\norm{\bar c^{(t)}}^2}\}. \label{eq:step-size-gaussian}
\end{align}
Here, we take the maximum of 1 and the bias-corrected step size to ensure the acceleration of the convergence.
As shown in Fig.~\ref{fig:eta_g}, $\eta_g^{(t)}$ is close to $\eta_{\mathrm{target}}^{(t)}$ for large $M$.
Using the above formula, LDP-FedEXP updates the global model as
$
    w^{(t + 1)} := w^{(t)} + \eta_g^{(t)} \bar c^{(t)}.
$
We show the entire training process in Algorithm~\ref{alg:ldp-fedexp}.

\begin{remark}[Adaptivity to the Noise Scale]
    The expectation of denominator in the step size rule $\mathbb{E}[\|\bar c^{(t)}\|^2]$ is given by $\|\bar \Delta^{(t)}\|^2 + d\sigma^2 / M$.
    Here, $d\sigma^2 / M$ represents the effective noise scale which is added to $\bar c^{(t)}$.
    Thus, the step size is small if the noise scale $\sigma$ is large or the number of clients $M$ is small.
    Indeed, Fig.~\ref{fig:eta_g} shows that our proposed step size increases as the number of clients $M$ increases.
    That is, the step size is adaptive not only to the heterogeneity of the local updates but also to the effective noise scale.
\end{remark}

\subsubsection{Step Size Formula for PrivUnit}\label{sec:privunit}
In the previous section, we have provided the step size formula for Gaussian mechanism.
Here, we provide the step size rule for PrivUnit.

Let $\hat r_i^{(t)} = \scalardp(\Delta_i^{(t)};\varepsilon_2)$ and $z_i^{(t)} = \privunit(\Delta_i^{(t)} / \|\Delta_i^{(t)}\|;\varepsilon_0, \varepsilon_1)$.
Note that $c_i^{(t)} = \hat r_i^{(t)} \cdot z_i^{(t)}$.
Since $\norm{z_i} = 1 / m$, where $m > 0$ is a constant,
we can calculate $|\hat r_i^{(t)}|$ as $m \cdot \|c_i^{(t)}\|$.
Furthermore, since $\hat r_i^{(t)}$ takes discrete values, we can reconstruct $\hat r^{(t)}_i$ from $|\hat r^{(t)}_i|$
except for special choices of privacy parameter $\varepsilon_2$.
However, as shown in~\citet{bhowmick2018protection}, the variance of the noisy update is not constant
and depends on the norm of the original update in a complicated way.
Thus, it is not straightforward to develop an unbiased estimator of $\|\Delta_i^{(t)}\|^2$.
To deal with this issue, we utilize the following upper bound of the variance of PrivUnit:
\begin{align*}
    \Expec{\ab(\hat r_i^{(t)} - r_i^{(t)})^2} & \leq c_1 \ab(r_i^{(t)})^2 + c_2 r_i^{(t)} + c_3,
\end{align*}
where $r_i^{(t)} = \|\Delta^{(t)}_i\|$, and $c_1, c_2, c_3$ are constants defined in Algorithm~\ref{alg:norm-privunit}.
Based on the above upper bound, we propose the following formula for the step size:
\begin{align}
    \eta_g^{(t)} & = \max\ab\{1, \frac{\frac{1}{M} \sum_{i=1}^M \hat s_i}{\norm{\bar c^{(t)}}^2}\}, \label{eq:step-size-privunit}
\end{align}
where $\hat s_i = \frac{(\hat r_i^{(t)})^2 - c_2 \hat r_i^{(t)} - c_3}{1 + c_1}$.
See Algorithm~\ref{alg:norm-privunit} for the detailed procedure.
Here, $\frac{1}{M} \sum_{i=1}^M \hat s_i$ is not an unbiased estimator of $\frac{1}{M} \sum_{i=1}^M \|\Delta_i^{(t)}\|^2$
but it satisfies
\begin{align*}
    \Expec{\frac{1}{M} \sum_{i=1}^M \hat s_i} \leq \frac{1}{M}\sum_{i=1}^M \norm{\Delta_i^{(t)}}^2.
\end{align*}
This property is sufficient to prove the convergence guarantee in Section~\ref{sec:utility}.
In addition, as shown in Fig.~\ref{fig:eta_g}, the step size formula~\eqref{eq:step-size-privunit}
accurately estimates $\eta^{(t)}_{\mathrm{target}}$.

\subsection{CDP-FedEXP}
In the CDP setting, the server can calculate Eq.~\eqref{eq:optimal-eta-g} but it does not satisfy DP.
Since $\|\bar c^{(t)}\|$ can be arbitrarily small and the sensitivity of $\eta_{\mathrm{target}}^{(t)}$ is not bounded, we cannot apply Gaussian mechanism to Eq.~\eqref{eq:optimal-eta-g} directly.
Thus, we propose the following formula:
\begin{align}
    \eta_g^{(t)} := \max\ab\{1, \frac{\frac{1}{M}\sum_{i=1}^M \norm{\Delta_i^{(t)}}^2 + \xi^{(t)}}{\norm{\bar c^{(t)}}^2}\}, \label{eq:step-size-cdp}
\end{align}
where $\xi^{(t)}$ follows $\mathcal{N}(0, \sigma_\xi^2)$.
Here, the numerator is an unbiased estimator of $\frac{1}{M}\sum_{i=1}^M \|\Delta_i^{(t)}\|^2$.
We show the entire training process in Algorithm~\ref{alg:cdp-fedexp}.

Since clipping at the client side ensures that $\|\Delta_i^{(t)}\|^2 \leq C^2$
the sensitivity of the numerator is bounded by $C^2 / M$.
Thus, the above formula satisfies the CDP.
The variance $\sigma_\xi$ of $\xi^{(t)}$ seems to be a hyperparameter
but we can set $\sigma_\xi$ sufficiently small without degrading the privacy guarantee
if $d$ is large since the privacy budget for privatizing the scalar is negligible compared to that for privatizing the $d$-dimensional vector $\bar \Delta^{(t)}$.
Moreover, we find that it is sufficient to set $\sigma_\xi = d\sigma^2 / M$
to obtain the same bias from DP noise as DP-FedAvg based on the convergence analysis.
This makes the step size formula completely hyperparameter-free.

\begin{algorithm}[tbh]
    \caption{LDP-FedEXP}
    \label{alg:ldp-fedexp}
    \begin{algorithmic}
        \STATE {\bfseries Input:} initial $w^{(0)}$, clipping threshold $C$, number of rounds $T$
        \STATE {\bfseries Output:} final $w^{(T)}$
        \FOR{$t=1$ {\bfseries to} $T$}
        \STATE Server sends $w^{(t-1)}$ to all clients
        \FOR{client $i=1$ {\bfseries to} $M$}
        \STATE $\tilde \Delta_i^{(t)} \leftarrow \mathrm{localupdate}(w^{(t - 1)}, \mathcal{D}_i)$
        \STATE $\Delta_i^{(t)} \leftarrow \min\{C / \|\tilde \Delta_i^{(t)}\|, 1\} \cdot \tilde \Delta_i^{(t)}$
        \STATE $c_i^{(t)} \leftarrow \mathrm{LocalRandomizer}(\Delta_i^{(t)})$
        \STATE Client $i$ sends $c_i^{(t)}$ to server
        \ENDFOR
        \STATE Aggregate local updates: $\bar c^{(t)} \leftarrow \frac{1}{M} \sum_{i=1}^M c^{(t)}_i$
        \STATE Compute global step size $\eta_g^{(t)}$ as in Eq.~\eqref{eq:step-size-gaussian} or~\eqref{eq:step-size-privunit}.
        \STATE Update global model with $w^{(t)} \leftarrow w^{(t-1)} + \eta_g^{(t)} \bar c^{(t)}$
        \ENDFOR
    \end{algorithmic}
\end{algorithm}

\begin{algorithm}[tbh]
    \caption{CDP-FedEXP}
    \label{alg:cdp-fedexp}
    \begin{algorithmic}
        \STATE {\bfseries Input:} initial $w^{(0)}$, clipping threshold $C$, noise scale $\sigma$, number of rounds $T$
        \STATE {\bfseries Output:} final $w^{(T)}$
        \FOR{$t=1$ {\bfseries to} $T$}
        \STATE Server sends $w^{(t-1)}$ to all clients
        \FOR{user $i=1$ {\bfseries to} $M$}
        \STATE $\tilde \Delta_i^{(t)} \leftarrow \mathrm{localupdate}(w^{(t - 1)}, \mathcal{D}_i)$
        \STATE $\Delta_i^{(t)} \leftarrow \min\{C / \|\tilde \Delta_i^{(t)}\|, 1\} \cdot \tilde \Delta_i^{(t)}$
        \STATE Client $i$ sends $\Delta_i^{(t)}$ to server
        \ENDFOR
        \STATE Aggregate local updates and add noise: \\
        ~~$\bar c^{(t)} \leftarrow \frac{1}{M} \sum_{i=1}^M \Delta^{(t)}_i + \varepsilon^{(t)}\quad (\varepsilon^{(t)} \sim \mathcal{N}(0, \sigma^2 / M))$
        \STATE Compute global step size $\eta_g^{(t)}$ as in Eq.~\eqref{eq:step-size-cdp}.
        \STATE Update global model with $w^{(t)} \leftarrow w^{(t-1)} + \eta_g^{(t)} \bar c^{(t)}$
        \ENDFOR
    \end{algorithmic}
\end{algorithm}

\begin{algorithm}[tbh]
    \caption{Local update}
    \label{alg:localupdate}
    \begin{algorithmic}
        \STATE {\bfseries Input:} initial $w^{(t, 0)}$, local dataset $\mathcal{D}_i$
        \STATE {\bfseries Output:} final $w^{(t, \tau)}$
        \FOR{$k=1$ {\bfseries to} $\tau$}
        \STATE $w^{(t, k)} \leftarrow w^{(t, k-1)} - \eta_l \nabla f_i(w^{(t, k-1)})$
        \ENDFOR
    \end{algorithmic}
\end{algorithm}

\begin{algorithm}[tbh]
    \caption{Norm Estimation for PrivUnit}
    \label{alg:norm-privunit}
    \begin{algorithmic}
        \STATE {\bfseries Input:} Noisy update $c := \privunit(\Delta / \norm{\Delta};\varepsilon_0, \varepsilon_1) \cdot \scalardp(\norm{\Delta};\varepsilon_2)$
        \STATE {\bfseries Output:} Estimated value $\hat s$ of $\norm{\Delta}^2$
        \STATE Set $a, b, k > 0$ as in Algorithm~\ref{alg:scalardp} and $m$ as in Algorithm~\ref{alg:privunit}
        \STATE $\tilde r \leftarrow m \cdot \norm{c}$, $\tilde J \leftarrow \tilde r / a + b$.
        \STATE \textbf{if} $\tilde J \in \Z$ \textbf{then} $\hat r \leftarrow \tilde r$ \textbf{else} $\hat r \leftarrow -\tilde r$
        \STATE $\hat s \leftarrow \frac{1}{1 + c_1} (\hat r^2 - c_2 \hat r - c_3)$, \\
        where $c_1 = \frac{k + 1}{e^{\varepsilon_2} - 1}, c_2 = -c_1 C, c_3 = (c_1 + 1) \frac{C^2}{4k^2} + c_1 C^2 \ab[\frac{(2k + 1)(e^{\varepsilon_2}+ k)}{6k(e^{\varepsilon_2} - 1)} - \frac{k + 1}{4(e^{\varepsilon_2} - 1)}]$.
    \end{algorithmic}
\end{algorithm}

\section{Theoretical Analysis}
In this section, we provide the privacy guarantee and the convergence analysis of the proposed DP-FedEXP algorithm.
We find that the proposed methods provably accelerate the DP-FedAvg while maintaining the privacy guarantee.

\subsection{Privacy}\label{sec:privacy}
Here, we provide the formal privacy guarantee of LDP-FedEXP and CDP-FedEXP.
\begin{proposition}[LDP case]\label{prop:ldp}
    LDP-FedEXP satisfies the same privacy guarantee as DP-FedAvg in the LDP setting.
    That is, the local computation at each client in LDP-FedEXP with Gaussian mechanism satisfies $(\varepsilon, \delta)$-LDP,
    where $\rho = 2C^2 / \sigma^2$ and $\varepsilon = \alpha \rho + \log (1/\delta) / (\alpha - 1)$ for any $\delta \in (0, 1)$ and $\alpha \in (1, \infty)$.
    In addition, LDP-FedEXP with PrivUnit satisfies $\varepsilon$-LDP,
    where $\varepsilon = \varepsilon_0 + \varepsilon_1 + \varepsilon_2$.
\end{proposition}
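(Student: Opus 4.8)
The plan is to reduce the privacy analysis of LDP-FedEXP entirely to that of DP-FedAvg by isolating the only change — the data-dependent global step size — and showing it is pure post-processing. Fix a round $t$ and the previous global model $w^{(t-1)}$. Then the client-side computation of LDP-FedEXP — run the local updates to form $\tilde\Delta_i^{(t)}$, clip to $\Delta_i^{(t)} = \min\{C/\norm{\tilde\Delta_i^{(t)}},1\}\cdot\tilde\Delta_i^{(t)}$, and release $c_i^{(t)} = \localrandomizer(\Delta_i^{(t)})$ — is \emph{identical} to that of DP-FedAvg. The only addition is the server's computation of $\eta_g^{(t)}$ (Eq.~\eqref{eq:step-size-gaussian} or \eqref{eq:step-size-privunit}) and the global update $w^{(t)} \leftarrow w^{(t-1)} + \eta_g^{(t)}\bar c^{(t)}$, both of which are deterministic functions of the already-released messages $\{c_i^{(t)}\}_{i=1}^M$ and of public constants ($\sigma$, $d$, $C$, and the PrivUnit parameters $\varepsilon_0,\varepsilon_1,\varepsilon_2$). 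Since in the LDP threat model the adversary's view consists precisely of the transmitted $c_i^{(t)}$, the post-processing invariance of differential privacy implies that the per-round LDP guarantee of LDP-FedEXP coincides with that of DP-FedAvg; no privacy budget is spent on the adaptive step size.

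It then remains to instantiate the per-round guarantee for the two local randomizers. For the Gaussian mechanism, clipping ensures $\norm{\Delta_i^{(t)}} \le C$, so for any two local datasets the $\ell_2$-sensitivity of $\Delta_i^{(t)}$ is at most $\norm{\Delta_i^{(t)}} + \norm{\Delta_i'^{(t)}} \le 2C$; since $c_i^{(t)} = \Delta_i^{(t)} + \mathcal{N}(0,\sigma^2 I_d)$, the Gaussian mechanism satisfies $(\alpha, \alpha(2C)^2/(2\sigma^2))$-RDP, that is $(\alpha,\alpha\rho)$-RDP with $\rho = 2C^2/\sigma^2$, for every $\alpha>1$. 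The standard conversion from $(\alpha,\alpha\rho)$-RDP to $(\varepsilon,\delta)$-DP yields $\varepsilon = \alpha\rho + \log(1/\delta)/(\alpha-1)$ for any $\delta\in(0,1)$, which is the claimed bound. For PrivUnit, the release $c_i^{(t)} = \hat r_i^{(t)}\cdot z_i^{(t)}$ with $z_i^{(t)} = \privunit(\Delta_i^{(t)}/\norm{\Delta_i^{(t)}};\varepsilon_0,\varepsilon_1)$ and $\hat r_i^{(t)} = \scalardp(\norm{\Delta_i^{(t)}};\varepsilon_2)$ is a deterministic function of the independently randomized pair $(z_i^{(t)},\hat r_i^{(t)})$; invoking the $(\varepsilon_0+\varepsilon_1)$-DP guarantee of $\privunit$ \citep{bhowmick2018protection} (Algorithm~\ref{alg:privunit}) and the $\varepsilon_2$-DP guarantee of $\scalardp$ (Algorithm~\ref{alg:scalardp}), basic composition followed by post-processing gives $(\varepsilon_0+\varepsilon_1+\varepsilon_2)$-LDP with $\delta = 0$.

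I do not expect a genuinely hard step here; the only point deserving care is the first one — arguing that a step size depending jointly on all $M$ clients' messages does not degrade the \emph{per-client} LDP guarantee. This is resolved by noting that $\eta_g^{(t)}$ (and hence $w^{(t)}$) is measurable with respect to the $\sigma$-algebra generated by the already-privatized releases, so it contributes nothing beyond post-processing; the remainder is a direct application of the known privacy accounting for the Gaussian mechanism and for PrivUnit. If a statement for the full $T$-round algorithm were desired, one would additionally invoke adaptive (sequential) composition over rounds, exactly as in the analysis of DP-FedAvg.
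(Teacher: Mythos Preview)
Your proposal is correct and matches the paper's approach: the paper likewise observes that the client-side mechanism is unchanged from DP-FedAvg, bounds the $\ell_2$-sensitivity by $2C$ to obtain $(\alpha,\alpha\rho)$-RDP for the Gaussian mechanism (then converts via Mironov's lemma), and cites \citet{bhowmick2018protection} for the $(\varepsilon_0+\varepsilon_1+\varepsilon_2)$-DP of the PrivUnit/ScalarDP product. Your explicit post-processing argument for the adaptive step size is a useful clarification that the paper leaves implicit.
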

\begin{proposition}[CDP case]\label{prop:cdp}
    The entire training process of CDP-FedEXP satisfies $(\varepsilon, \delta)$-CDP,
    where $\rho = 2C^2T / M\sigma^2, \rho_\xi = C^4T / 2M^2 \sigma_\xi^2$
    and $\varepsilon = \alpha(\rho + \rho_\xi) + \log (1/\delta) / (\alpha - 1)$ for any $\delta \in (0, 1)$ and $\alpha \in (1, \infty)$.
\end{proposition}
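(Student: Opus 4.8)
The plan is to reduce the privacy analysis of CDP-FedEXP to the Rényi differential privacy (RDP) of a fixed number of Gaussian mechanisms, then invoke post-processing, adaptive composition, and the standard RDP-to-$(\varepsilon,\delta)$ conversion. First I would isolate exactly which data-dependent randomized quantities the server computes in Algorithm~\ref{alg:cdp-fedexp}: at round $t$ these are (i) the noisy aggregate $\bar c^{(t)} = \frac{1}{M}\sum_{i=1}^M \Delta_i^{(t)} + \varepsilon^{(t)}$ with $\varepsilon^{(t)}\sim\mathcal{N}(0,(\sigma^2/M) I_d)$, i.e. the Gaussian mechanism applied to $g_1^{(t)}:=\frac{1}{M}\sum_i\Delta_i^{(t)}$, and (ii) the perturbed numerator $N^{(t)} := \frac{1}{M}\sum_i\|\Delta_i^{(t)}\|^2 + \xi^{(t)}$ from Eq.~\eqref{eq:step-size-cdp}, i.e. the Gaussian mechanism applied to the scalar $g_2^{(t)}:=\frac{1}{M}\sum_i\|\Delta_i^{(t)}\|^2$. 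Everything else the server produces, namely $\eta_g^{(t)} = \max\{1, N^{(t)}/\|\bar c^{(t)}\|^2\}$ and $w^{(t)} = w^{(t-1)} + \eta_g^{(t)}\bar c^{(t)}$, is a deterministic function of $w^{(t-1)}$, $\bar c^{(t)}$, and $N^{(t)}$; hence by induction on $t$ the externally visible transcript is a post-processing of $\{(\bar c^{(t)}, N^{(t)})\}_{t=1}^T$, so it suffices to bound the privacy of this collection.

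Next I would bound the $\ell_2$-sensitivities. For neighboring inputs differing only in the dataset of a single client $j$, only the $j$-th summand of $g_1^{(t)}$ and $g_2^{(t)}$ changes. Client-side clipping enforces $\|\Delta_j^{(t)}\|\le C$, giving $\|g_1^{(t)}(x)-g_1^{(t)}(x')\|_2 \le 2C/M$, and $\|\Delta_j^{(t)}\|^2\in[0,C^2]$ gives $|g_2^{(t)}(x)-g_2^{(t)}(x')|\le C^2/M$. Crucially these bounds hold uniformly in $t$ and in the realization of the earlier releases, which is exactly what is needed to apply adaptive composition.

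With the sensitivities in hand I would invoke the Gaussian-mechanism RDP bound: a Gaussian mechanism with $\ell_2$-sensitivity $s$ and noise covariance $\nu^2 I$ is $(\alpha, \alpha s^2/(2\nu^2))$-RDP for all $\alpha>1$. Thus per round, releasing $\bar c^{(t)}$ is $(\alpha, \alpha\cdot 2C^2/(M\sigma^2))$-RDP and releasing $N^{(t)}$ is $(\alpha, \alpha\cdot C^4/(2M^2\sigma_\xi^2))$-RDP. Adaptive composition over the $2T$ releases (adaptive because $g_1^{(t)},g_2^{(t)}$ depend on $w^{(t-1)}$, hence on earlier outputs) adds the RDP orders, yielding $(\alpha, \alpha(\rho+\rho_\xi))$-RDP with $\rho = 2C^2T/(M\sigma^2)$ and $\rho_\xi = C^4T/(2M^2\sigma_\xi^2)$; post-processing preserves this. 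Applying the standard conversion, that $(\alpha,\varepsilon_\alpha)$-RDP implies $(\varepsilon_\alpha + \log(1/\delta)/(\alpha-1),\delta)$-DP for every $\delta\in(0,1)$, then gives $(\varepsilon,\delta)$-CDP with $\varepsilon = \alpha(\rho+\rho_\xi) + \log(1/\delta)/(\alpha-1)$ for any $\alpha\in(1,\infty)$, as claimed; note the first term $\rho$ coincides with the budget of DP-FedAvg and $\rho_\xi$ is the extra cost of privatizing the step-size numerator. The step needing the most care is the reduction in the first paragraph: one must verify that the adaptively chosen $\eta_g^{(t)}$, which propagates into all later rounds through $w^{(t-1)}$, is genuinely a post-processing of the released Gaussian outputs and opens no new data-dependent channel, and that the sensitivity bounds are valid conditionally on any realization of the earlier outputs so that adaptive composition legitimately applies; once this is pinned down, the remainder is routine accounting.
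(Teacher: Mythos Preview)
Your proposal is correct and follows essentially the same approach as the paper: bound the $\ell_2$-sensitivities of the two per-round Gaussian releases ($2C/M$ for $\bar\Delta^{(t)}$ and $C^2/M$ for the squared-norm average), apply the Gaussian-mechanism RDP bound, compose over the $T$ rounds, and convert via Lemma~\ref{lemma:dp-conversion}. Your write-up is in fact more careful than the paper's, which omits the explicit post-processing reduction and the remark that composition must be adaptive.
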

Our proof for Gaussian mechanism is based on R\`{e}nyi differential privacy (RDP)~\cite{mironov2017renyi} and its composition property.
See Appendix~\ref{appendix:privacy} for details.
For LDP case, the privacy guarantee of LDP-FedEXP is the same as that of LDP-FedAvg
since we use the same mechanism for the local computation.
For CDP case, additional privacy budget $\alpha \rho_\xi$ is required for privatizing the numerator in the step size formula.
However, if we set $\sigma_\xi = d\sigma^2 / M$,
we have $\rho_\xi = C^4 T / 2d^2\sigma^4 = O(\rho^2M^2/ Td^2)$.
Thus, the additional privacy budget consumption is negligible if $\rho = O(1)$ and $T \cdot d^2 \gg M^2$, which is a common setting in modern deep learning tasks.

\subsection{Utility}\label{sec:utility}
In this section, we prove the convergence guarantee of the DP-FedEXP for general non-convex objectives.
Here, we require the following standard assumptions:
\begin{assumption}[Smoothness and Lipschitz continuity]\label{assumption:smoothness}
    Each client loss function $f_i$ is $L$-smooth and $G$-Lipschitz continuous,
    where $L, G > 0$ are constants.
    That is, for any $w, w' \in \R^d$, we have $\norm{\nabla f_i(w) - \nabla f_i(w')} \leq L \norm{w - w'}$ and $\norm{\nabla f_i(w)} \leq G$.
\end{assumption}

\begin{assumption}[Bounded gradient diversity]\label{assumption:diversity}
    For any $w \in \R^d$, the diversity of the gradients is bounded as
    \begin{align*}
        \frac{1}{M} \sum_{i=1}^M \norm{\nabla f_i(w) - \nabla F(w)}^2 & \leq {\sigma_g}^2,
    \end{align*}
    where $\sigma_g^2$ is a constant.
\end{assumption}

Under the above assumptions, we provide the convergence guarantee of LDP-FedEXP and CDP-FedEXP.
\begin{theorem}[LDP case]\label{thm:non-convex-ldp}
    Assume that Assumptions~\ref{assumption:smoothness} and~\ref{assumption:diversity} hold.
    Let $F^* = \min_w F(w)$ and $C = \eta_l \tau G$. Then, for any $\eta_l = \Theta(1/(L\tau)) < 1/(24L\tau)$ and the sequence $\{w^{(t)}\}_{t = 1}^T$ generated by LDP-FedEXP with Gaussian mechanism satisfies
    \begin{align*}
         & \min_{t \in [T]} \norm{\grad F(w^{(t)})}^2 \leq \underbrace{O\ab(\frac{F(w^{0}) - F^*}{\sum_{t=1}^T \eta_g^{(t)}\eta_l \tau})}_{T_1:=\text{initialization error}}        \\                                                                                       \\
         & \quad + \underbrace{O(\eta_l^2L^2\tau(\tau - 1)\sigma_g^2)}_{T_2 := \text{client drift error}} + \underbrace{O(\eta_l L\tau \sigma_g^2)}_{T_3 := \text{global variance}} \\
         & \quad + \underbrace{O\ab(\frac{L\sigma^2q^2}{\eta_l \tau} \ab[\frac{d}{M} + \sqrt{\frac{d}{M}}])}_{T_4^{\text{gauss}} := \text{privacy error}}
    \end{align*}
    with probability at least $1 - Te^{-c \cdot q^2}$ for any $q \in [1, \sqrt{M}]$, where $c$ is a numerical constant.
    On the other hand, LDP-FedEXP with PrivUnit for $\varepsilon_1, \varepsilon_2 = \Omega(1)$ satisfies
    \begin{align*}
        \min_{t \in [T]} \norm{\grad F(w^{(t)})}^2 & \leq T_1 + T_2 + T_3                                                                                                                          \\
                                                   & + \underbrace{O\ab(\frac{L \sigma^2 q^2}{\eta_l \tau} \ab[\frac{d}{M} + \sqrt{\frac{1}{M}}])}_{T^{\text{privunit}}_4 := \text{privacy error}}
    \end{align*}
    with probability at least $1 - Te^{- c \cdot q^2}$ for any $q \in [1, \sqrt{M}]$, where $c$ is a numerical constant.
\end{theorem}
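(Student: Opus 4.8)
\textit{Proof strategy.} The plan is to derive a one-round descent inequality from the $L$-smoothness of $F$, sum it over $t=1,\dots,T$, and divide by $\sum_{t=1}^T\eta_g^{(t)}\eta_l\tau$; the adaptive step size is handled by an algebraic identity, and the DP noise is controlled on a high-probability event rather than in expectation. Two preliminaries. First, since $C=\eta_l\tau G$ and each local step moves the iterate by at most $\eta_l G$ (Assumption~\ref{assumption:smoothness}), we have $\|\tilde\Delta_i^{(t)}\|\le\eta_l\tau G=C$, so clipping is never active and $\Delta_i^{(t)}=-\eta_l\sum_{k=0}^{\tau-1}\nabla f_i(w_i^{(t-1,k)})$. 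Second, writing $\bar c^{(t)}=\bar\Delta^{(t)}+n^{(t)}$ with $n^{(t)}:=\bar c^{(t)}-\bar\Delta^{(t)}$ (mean-zero for both the Gaussian and PrivUnit randomizers, with $\Expec{\|n^{(t)}\|^2}\le d\sigma^2/M$) and letting $\mathrm{num}^{(t)}$ denote the numerator in the step-size rule, the definition of $\eta_g^{(t)}$ gives the identity $\eta_g^{(t)}\|\bar c^{(t)}\|^2=\max\{\|\bar c^{(t)}\|^2,\mathrm{num}^{(t)}\}$. This turns the dangerous quadratic $\frac{L}{2}(\eta_g^{(t)})^2\|\bar c^{(t)}\|^2=\frac{L}{2}\eta_g^{(t)}\bigl(\eta_g^{(t)}\|\bar c^{(t)}\|^2\bigr)\le\frac{L}{2}\eta_g^{(t)}\bigl(\|\bar c^{(t)}\|^2+\mathrm{num}^{(t)}\bigr)$ into an expression that is only linear in the (a priori unbounded) factor $\eta_g^{(t)}$.

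\textit{One-round bound.} Smoothness gives $F(w^{(t)})-F(w^{(t-1)})\le\eta_g^{(t)}\langle\nabla F(w^{(t-1)}),\bar\Delta^{(t)}\rangle+\eta_g^{(t)}\langle\nabla F(w^{(t-1)}),n^{(t)}\rangle+\frac{L}{2}(\eta_g^{(t)})^2\|\bar c^{(t)}\|^2$. Writing $\bar\Delta^{(t)}=-\eta_l\tau g^{(t)}$ with $g^{(t)}$ the averaged local gradient and applying the polarization identity together with the standard bounded-drift estimate $\frac{1}{M\tau}\sum_{i,k}\|w_i^{(t-1,k)}-w^{(t-1)}\|^2\le O\bigl(\eta_l^2\tau(\tau-1)(\sigma_g^2+\|\nabla F(w^{(t-1)})\|^2)\bigr)$ (Assumption~\ref{assumption:diversity}), the first inner product is bounded by $-\frac{1}{2}\eta_g^{(t)}\eta_l\tau\|\nabla F(w^{(t-1)})\|^2-\frac{\eta_g^{(t)}}{2\eta_l\tau}\|\bar\Delta^{(t)}\|^2+O\bigl(L^2\eta_g^{(t)}\eta_l^3\tau^3(\sigma_g^2+\|\nabla F(w^{(t-1)})\|^2)\bigr)$; the drift piece feeds $T_2$ after the final division and its $\|\nabla F\|^2$ part is negligible since $L\eta_l\tau<1/24$. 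In $\frac{L}{2}\eta_g^{(t)}\|\bar c^{(t)}\|^2\le L\eta_g^{(t)}\|\bar\Delta^{(t)}\|^2+L\eta_g^{(t)}\|n^{(t)}\|^2$ the first term is absorbed by $-\frac{\eta_g^{(t)}}{2\eta_l\tau}\|\bar\Delta^{(t)}\|^2$ (again $L\eta_l\tau<1/24$). For $\frac{L}{2}\eta_g^{(t)}\mathrm{num}^{(t)}$, expand $\mathrm{num}^{(t)}$ and use $\frac1M\sum_i\|\Delta_i^{(t)}\|^2\le O\bigl(\eta_l^2\tau^2(\|\nabla F(w^{(t-1)})\|^2+\sigma_g^2+L^2\cdot\mathrm{drift})\bigr)$ together with $L\eta_l^2\tau^2=O(\eta_l\tau)$: this produces a small multiple of $\eta_g^{(t)}\eta_l\tau\|\nabla F\|^2$ (absorbed), an $O(L\eta_g^{(t)}\eta_l^2\tau^2\sigma_g^2)$ term (which becomes $T_3$), more drift (into $T_2$), and the genuinely stochastic part of $\mathrm{num}^{(t)}$: $\frac2M\sum_i\langle\Delta_i^{(t)},\varepsilon_i^{(t)}\rangle+\bigl(\frac1M\sum_i\|\varepsilon_i^{(t)}\|^2-d\sigma^2\bigr)$ for Gaussian, and $\frac1M\sum_i\hat s_i-\frac1M\sum_i\|\Delta_i^{(t)}\|^2$ for PrivUnit (the latter having nonpositive expectation by the stated bound $\Expec{\frac1M\sum_i\hat s_i}\le\frac1M\sum_i\|\Delta_i^{(t)}\|^2$).

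\textit{Good event and summation.} For each round define the event on which: $\bigl|\frac1M\sum_i\|\varepsilon_i^{(t)}\|^2-d\sigma^2\bigr|=O(\sigma^2 q\sqrt{d/M})$ (a $\chi^2_{Md}$ tail), $\bigl|\frac1M\sum_i\langle\Delta_i^{(t)},\varepsilon_i^{(t)}\rangle\bigr|=O(C\sigma q/\sqrt{M})$ (sub-Gaussian, using $\|\Delta_i^{(t)}\|\le C$), $\|n^{(t)}\|^2=O\bigl((d\sigma^2/M)q^2\bigr)$ (a $\chi^2_d$/vector-concentration tail), and, for PrivUnit, $\frac1M\sum_i\hat s_i\le\frac1M\sum_i\|\Delta_i^{(t)}\|^2+O(C^2 q/\sqrt{M})$ (scalar concentration of the bounded $\hat s_i$ around a mean that the above inequality controls from above). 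The PrivUnit bound carries no $\sqrt d$ because $\hat s_i$ is a bounded scalar, which is exactly why the PrivUnit privacy error has $\sqrt{1/M}$ where the Gaussian one has $\sqrt{d/M}$. On this event $\eta_g^{(t)}\langle\nabla F(w^{(t-1)}),n^{(t)}\rangle\le\frac14\eta_g^{(t)}\eta_l\tau\|\nabla F(w^{(t-1)})\|^2+\frac{\eta_g^{(t)}}{\eta_l\tau}\|n^{(t)}\|^2$ and every term is deterministic. Summing over $t$, collecting all $\|\nabla F\|^2$ contributions into the leading $\frac12\sum_t\eta_g^{(t)}\eta_l\tau\|\nabla F(w^{(t-1)})\|^2$ (constants close because $L\eta_l\tau<1/24$), telescoping $F$, dividing by $\sum_t\eta_g^{(t)}\eta_l\tau$, and using $\min_{t}\|\nabla F(w^{(t)})\|^2\le\bigl(\sum_t\eta_g^{(t)}\|\nabla F(w^{(t-1)})\|^2\bigr)/\bigl(\sum_t\eta_g^{(t)}\bigr)$ give exactly $T_1+T_2+T_3+T_4$: $T_1$ from the telescoped $F(w^{(0)})-F^*$, and $T_4=\frac{L}{\eta_l\tau}$ times the round-uniform noise bounds, the $d/M$ term coming from $\|n^{(t)}\|^2$ and the $\langle\nabla F,n^{(t)}\rangle$ term and the $\sqrt{d/M}$ (resp. $\sqrt{1/M}$) term from the fluctuation of $\mathrm{num}^{(t)}$. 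A union bound over $t\in[T]$ gives probability $1-Te^{-cq^2}$, and the restriction $q\in[1,\sqrt M]$ is what lets lower-order terms such as $q^2/M$ be absorbed into $q\sqrt{d/M}$.

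\textit{Main obstacle.} The crux is the coupling between the data-dependent step size $\eta_g^{(t)}$ and the DP noise inside $\bar c^{(t)}$: $\eta_g^{(t)}$ is random, is not bounded by problem constants, and is positively correlated with $\|n^{(t)}\|^2$, so neither a clean conditional-expectation argument nor a worst-case bound on $\eta_g^{(t)}$ is available. The combination of the linearizing identity $\eta_g^{(t)}\|\bar c^{(t)}\|^2=\max\{\|\bar c^{(t)}\|^2,\mathrm{num}^{(t)}\}$, the one-sided/bias-corrected estimator inside $\mathrm{num}^{(t)}$, and the per-round high-probability control of the noise is what makes the argument close; matching the concentration orders to the advertised $d/M$ versus $\sqrt{d/M}$ dependence, and securing the $\sqrt{1/M}$ improvement for PrivUnit, is the delicate part of the bookkeeping.
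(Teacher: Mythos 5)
Your proposal is correct and follows essentially the same route as the paper's proof: the linearizing identity $\eta_g^{(t)}\|\bar c^{(t)}\|^2=\max\{\|\bar c^{(t)}\|^2,\mathrm{num}^{(t)}\}$ to tame the quadratic term, the polarization/drift decomposition into $T_1$--$T_4$, and per-round high-probability concentration (a $\chi^2$ tail giving the $\sqrt{d/M}$ fluctuation of the Gaussian numerator, a bounded-scalar Hoeffding bound giving $\sqrt{1/M}$ for PrivUnit, and a vector concentration bound for $\|\bar c^{(t)}-\bar\Delta^{(t)}\|^2$) combined with a union bound over rounds. The only cosmetic differences are that you make the no-clipping observation under $C=\eta_l\tau G$ explicit and bound the cross term $\frac{1}{M}\sum_i\langle\Delta_i^{(t)},\varepsilon_i^{(t)}\rangle$ directly rather than via AM--GM absorption; neither changes the argument.
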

\begin{theorem}[CDP case]\label{thm:non-convex-cdp}
    Assume that Assumptions~\ref{assumption:smoothness} and~\ref{assumption:diversity} hold.
    Let $F^* = \min_w F(w), \sigma_\xi = d\sigma^2 / M$, and $C = \eta_l \tau G$. Then, for any $\eta_l = \Theta(1/(L\tau)) < 1/(24L\tau)$,
    the sequence $\{w^{(t)}\}_{t = 1}^T$ generated by CDP-FedEXP satisfies
    \begin{align*}
        \min_{t \in [T]} \norm{\grad F(w^{(t)})}^2 & \leq T_1 + T_2 + T_3 + \underbrace{O\ab(\frac{L\sigma^2q^2}{\eta_l \tau} \cdot \frac{d}{M})}_{T_4^{\text{cdp}} := \text{privacy error}}
    \end{align*}
    with probability at least $1 - Te^{-c \cdot q^2}$ for $q \in [1, \sqrt{M}]$, where $c$ is a numerical constant
\end{theorem}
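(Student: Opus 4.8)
The plan is to run a non-convex descent argument on $F$ along the global iterates, exploit the FedEXP step-size identity to make the second-order term \emph{linear} in $\eta_g^{(t)}$, and then control the two DP noise sources by crude per-round maximal bounds. First I would note that since $C=\eta_l\tau G$ and each $f_i$ is $G$-Lipschitz, $\|\tilde\Delta_i^{(t)}\|\le\eta_l\tau G=C$, so clipping is inactive and $\Delta_i^{(t)}=-\eta_l\sum_{k=0}^{\tau-1}\nabla f_i(w_i^{(t-1,k)})$ exactly. Writing $\bar c^{(t)}=\bar\Delta^{(t)}+\varepsilon^{(t)}$ and applying $L$-smoothness of $F$ to $w^{(t)}=w^{(t-1)}+\eta_g^{(t)}\bar c^{(t)}$ gives
\[
F(w^{(t)})-F(w^{(t-1)})\le \eta_g^{(t)}\langle\nabla F(w^{(t-1)}),\bar\Delta^{(t)}\rangle+\eta_g^{(t)}\langle\nabla F(w^{(t-1)}),\varepsilon^{(t)}\rangle+\tfrac{L}{2}(\eta_g^{(t)})^2\|\bar c^{(t)}\|^2 .
\]
The crucial move is on the last term: by definition of $\eta_g^{(t)}$ we have $\eta_g^{(t)}\|\bar c^{(t)}\|^2=\max\bigl\{\|\bar c^{(t)}\|^2,\ \tfrac1M\sum_i\|\Delta_i^{(t)}\|^2+\xi^{(t)}\bigr\}\le \|\bar c^{(t)}\|^2+\tfrac1M\sum_i\|\Delta_i^{(t)}\|^2+|\xi^{(t)}|$, and $\|\bar c^{(t)}\|^2\le 2\|\bar\Delta^{(t)}\|^2+2\|\varepsilon^{(t)}\|^2\le 2\tfrac1M\sum_i\|\Delta_i^{(t)}\|^2+2\|\varepsilon^{(t)}\|^2$ by convexity of $\|\cdot\|^2$, so $(\eta_g^{(t)})^2\|\bar c^{(t)}\|^2\le \eta_g^{(t)}\bigl(3\tfrac1M\sum_i\|\Delta_i^{(t)}\|^2+2\|\varepsilon^{(t)}\|^2+|\xi^{(t)}|\bigr)$; every term now carries exactly one factor $\eta_g^{(t)}$.

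Next I would insert the standard FedAvg client-drift machinery derived from Assumptions~\ref{assumption:smoothness} and~\ref{assumption:diversity}: for $\eta_l L\tau<1/24$, $\tfrac1M\sum_i\|w_i^{(t-1,k)}-w^{(t-1)}\|^2=O(\eta_l^2 k(\tau-1))\bigl(\sigma_g^2+\|\nabla F(w^{(t-1)})\|^2\bigr)$, which yields $\langle\nabla F(w^{(t-1)}),\bar\Delta^{(t)}\rangle\le -\Omega(\eta_l\tau)\|\nabla F(w^{(t-1)})\|^2+O\bigl(\eta_l^3L^2\tau^2(\tau-1)\sigma_g^2\bigr)+(\text{smaller }\|\nabla F\|^2\text{ terms})$ and $\tfrac1M\sum_i\|\Delta_i^{(t)}\|^2\le O(\eta_l^2\tau^2)\bigl(\sigma_g^2+\|\nabla F(w^{(t-1)})\|^2\bigr)+(\text{drift})$. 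All the resulting gradient-norm terms have coefficient $O(\eta_l\tau)\cdot O(\eta_l L\tau)$ or smaller, hence are swallowed by the $\Omega(\eta_l\tau)\|\nabla F\|^2$ descent term once $\eta_l L\tau<1/24$; recalling $\tfrac1{\eta_l\tau}=\Theta(L)$, the surviving $\sigma_g^2$-terms assemble into $\eta_g^{(t)}\eta_l\tau\cdot O(T_2+T_3)$ (since $L\eta_l^2\tau^2\sigma_g^2=\eta_l\tau\cdot O(T_3)$ and $\eta_l^3L^2\tau^2(\tau-1)\sigma_g^2=\eta_l\tau\cdot O(T_2)$). For the $\varepsilon^{(t)}$ inner product I use Young's inequality with a parameter proportional to $\eta_g^{(t)}\eta_l\tau$: $\eta_g^{(t)}\langle\nabla F(w^{(t-1)}),\varepsilon^{(t)}\rangle\le \tfrac18\eta_g^{(t)}\eta_l\tau\|\nabla F(w^{(t-1)})\|^2+O\bigl(\eta_g^{(t)}\|\varepsilon^{(t)}\|^2/(\eta_l\tau)\bigr)$, the first piece again absorbed. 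Collecting everything, for a numerical $c_0>0$,
\[
F(w^{(t)})-F(w^{(t-1)})\le -c_0\,\eta_g^{(t)}\eta_l\tau\,\|\nabla F(w^{(t-1)})\|^2+\eta_g^{(t)}\eta_l\tau\cdot O(T_2+T_3)+O\Bigl(\tfrac{\eta_g^{(t)}\|\varepsilon^{(t)}\|^2}{\eta_l\tau}\Bigr)+\tfrac{L}{2}\eta_g^{(t)}|\xi^{(t)}| .
\]

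Then I would telescope over $t=1,\dots,T$, use $F(w^{(T)})\ge F^*$ and $\sum_t\eta_g^{(t)}\|\nabla F(w^{(t-1)})\|^2\ge\bigl(\sum_t\eta_g^{(t)}\bigr)\min_t\|\nabla F(w^{(t)})\|^2$, and divide by $c_0\eta_l\tau\sum_t\eta_g^{(t)}$; the first two groups become $T_1+O(T_2+T_3)$. The key point is that after this division the $\varepsilon$ and $\xi$ contributions are \emph{weighted averages} $\sum_t(\eta_g^{(t)}/\sum_s\eta_g^{(s)})\|\varepsilon^{(t)}\|^2$ and $\sum_t(\eta_g^{(t)}/\sum_s\eta_g^{(s)})|\xi^{(t)}|$, hence bounded by $\max_t\|\varepsilon^{(t)}\|^2$ and $\max_t|\xi^{(t)}|$ — this is what lets us ignore that $\eta_g^{(t)}$ depends on the noise. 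A union bound over the $2T$ relevant Gaussian events with $\varepsilon^{(t)}\sim\mathcal N(0,\tfrac{\sigma^2}{M}I_d)$, giving $\|\varepsilon^{(t)}\|^2\le\tfrac{\sigma^2}{M}(d+2q\sqrt d+2q^2)=O(\tfrac{\sigma^2 dq^2}{M})$ for $q\ge1$, and $\xi^{(t)}\sim\mathcal N(0,\sigma_\xi^2)$ with $\sigma_\xi=d\sigma^2/M$, giving $|\xi^{(t)}|\le\sigma_\xi q=\tfrac{d\sigma^2 q}{M}$, each with failure probability $e^{-\Omega(q^2)}$, produces the $1-Te^{-cq^2}$ event. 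On it, using $\tfrac1{\eta_l\tau}=\Theta(L)$, the $\varepsilon$-term contributes $O\bigl(\max_t\|\varepsilon^{(t)}\|^2/(\eta_l\tau)^2\bigr)=O\bigl(\tfrac{L\sigma^2q^2}{\eta_l\tau}\cdot\tfrac dM\bigr)$ and the $\xi$-term $O\bigl(L\max_t|\xi^{(t)}|/(\eta_l\tau)\bigr)=O\bigl(\tfrac{L\sigma^2q}{\eta_l\tau}\cdot\tfrac dM\bigr)$, both bounded by $O(T_4^{\mathrm{cdp}})$ for $q\in[1,\sqrt M]$; that range is exactly where these concentration terms stay lower order, and it is also where the calibration $\sigma_\xi=d\sigma^2/M$ matches the DP-FedAvg noise floor used in Proposition~\ref{prop:cdp}. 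This yields $\min_t\|\nabla F(w^{(t)})\|^2\le T_1+O(T_2)+O(T_3)+O(T_4^{\mathrm{cdp}})$.

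The main obstacle is precisely the joint dependence of $\eta_g^{(t)}$ on both $\varepsilon^{(t)}$ and $\xi^{(t)}$: the noise terms are not mean-zero and ordinary martingale concentration does not apply directly. The resolution is to never attempt to cancel noise in expectation but to keep $\eta_g^{(t)}$ as a common multiplicative factor all the way through — which is exactly why linearizing the second-order term via the identity $\eta_g^{(t)}\|\bar c^{(t)}\|^2=\tfrac1M\sum_i\|\Delta_i^{(t)}\|^2+\xi^{(t)}$ on the active branch of the max is indispensable — and only then pass to the per-round maximal bounds, which is affordable because the CDP mechanism makes $\sigma_\xi$ negligible and a $T$-fold union bound costs only a $\log T$ hidden inside $q$.
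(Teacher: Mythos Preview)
Your proposal is correct and follows essentially the same route as the paper: smoothness descent, linearization of the quadratic term via the FedEXP identity $\eta_g^{(t)}\|\bar c^{(t)}\|^2\le\max\{\tfrac1M\sum_i\|\Delta_i^{(t)}\|^2+\xi^{(t)},\|\bar c^{(t)}\|^2\}$ (the paper keeps the $\max$, you bound it by the sum---equivalent up to constants), standard FedAvg drift bounds under $\eta_l L\tau<1/24$, telescoping with division by $\sum_t\eta_g^{(t)}$, and per-round high-probability concentration for $\|\varepsilon^{(t)}\|^2$ and $|\xi^{(t)}|$ combined by a union bound over $T$ rounds. Your explicit ``weighted average $\le$ max'' step is exactly how the paper handles the $\eta_g^{(t)}$--noise correlation, just stated more overtly.
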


See Appendix~\ref{appendix:utility} for the proof. The difficulty of the proof lies in the correlation between the global step size $\eta_g^{(t)}$
and the noisy update $\bar c^{(t)}$ as discussed in previous works~\citep{jhunjhunwala2023fedexp, li2024the}.
Since the step size $\eta_g^{(t)}$ depends on noisy update $\bar c^{(t)}$ in a complicated way, we need to carefully evaluate the error terms from DP noise.

\paragraph{Comparison with FedEXP}
The above theorems imply that the errors of LDP-FedEXP and CDP-FedEXP are decomposed into four terms:
initialization error $T_1$, client drift error $T_2$, global variance $T_3$, and privacy error $T_4$.
As shown in Theorem 2 from~\citet{jhunjhunwala2023fedexp}, the error of FedEXP is given by $T_1 + T_2 + T_3$.
Thus, the DP noise only affects the privacy error term $T_4$, which vanishes as the number of clients $M$ goes to infinity.

\vspace{-0.2cm}
\paragraph{Comparison with DP-FedAvg}
The error of DP-FedAvg is given by
$O\ab(\frac{F(w^{(0)}) - F^*}{T\eta_l \tau}) + O(\eta_l^2L^2\tau^2 \sigma_g^2) + O(\frac{L\sigma^2}{\eta_l \tau}\cdot \frac{d}{M})$
for both LDP and CDP cases~\citep{zhang2022understanding}.
The initialization error term $O\ab(\frac{F(w^{(0)}) - F^*}{T \eta_l \tau})$ is always larger than that of LDP-FedEXP and CDP-FedEXP since $\eta_g^{(t)} \geq 1$ for any $t$.
Thus, DP-FedEXP provably accelerate the convergence of DP-FedAvg in both LDP and CDP setting.
For the privacy error term $T_4$, LDP-FedEXP with the Gaussian mechanism has the additional term of order $\sqrt{\frac{d}{M}}$
unless $d = \Omega(M)$.
In contrast, LDP-FedEXP with PrivUnit achieves the same privacy error as a vanilla DP-FedAvg if $d = \Omega(\sqrt{M})$.
The difference comes from the estimation error of the numerator in the step size formula.
For PrivUnit, we can estimate the squared norm of the local update more accurately due to the separated privatization procedure.
Indeed, the variance of the global step size $\eta_g^{(t)}$ for PrivUnit is much smaller than that of the Gaussian mechanism as shown in Fig.~\ref{fig:eta_g}.
For the CDP case, CDP-FedEXP achieves the same privacy error as DP-FedAvg by setting $\sigma_\xi = d\sigma^2 / M$.


\section{Numerical Experiments}\label{sec:experiments}

\begin{figure*}[t]
    \centering
    \includegraphics[width=\textwidth]{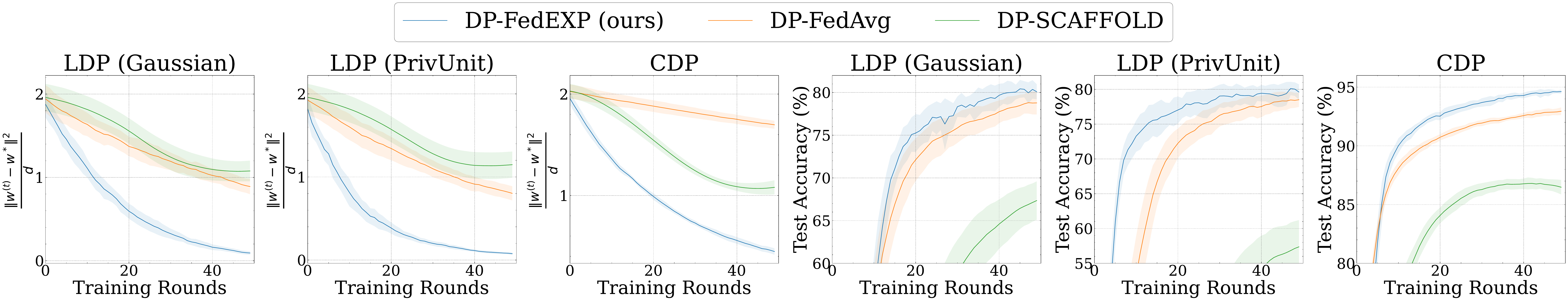}
    \vspace{-0.5cm}
    \caption{The distance to the optimal solution for the synthetic dataset (left) and test accuracy for the MNIST dataset (right).
        In both LDP and CDP cases, DP-FedEXP consistently outperforms baseline algorithms.}
    \label{fig:ldp}
\end{figure*}

\begin{figure}[t]
    \centering
    \includegraphics[width=0.45\textwidth]{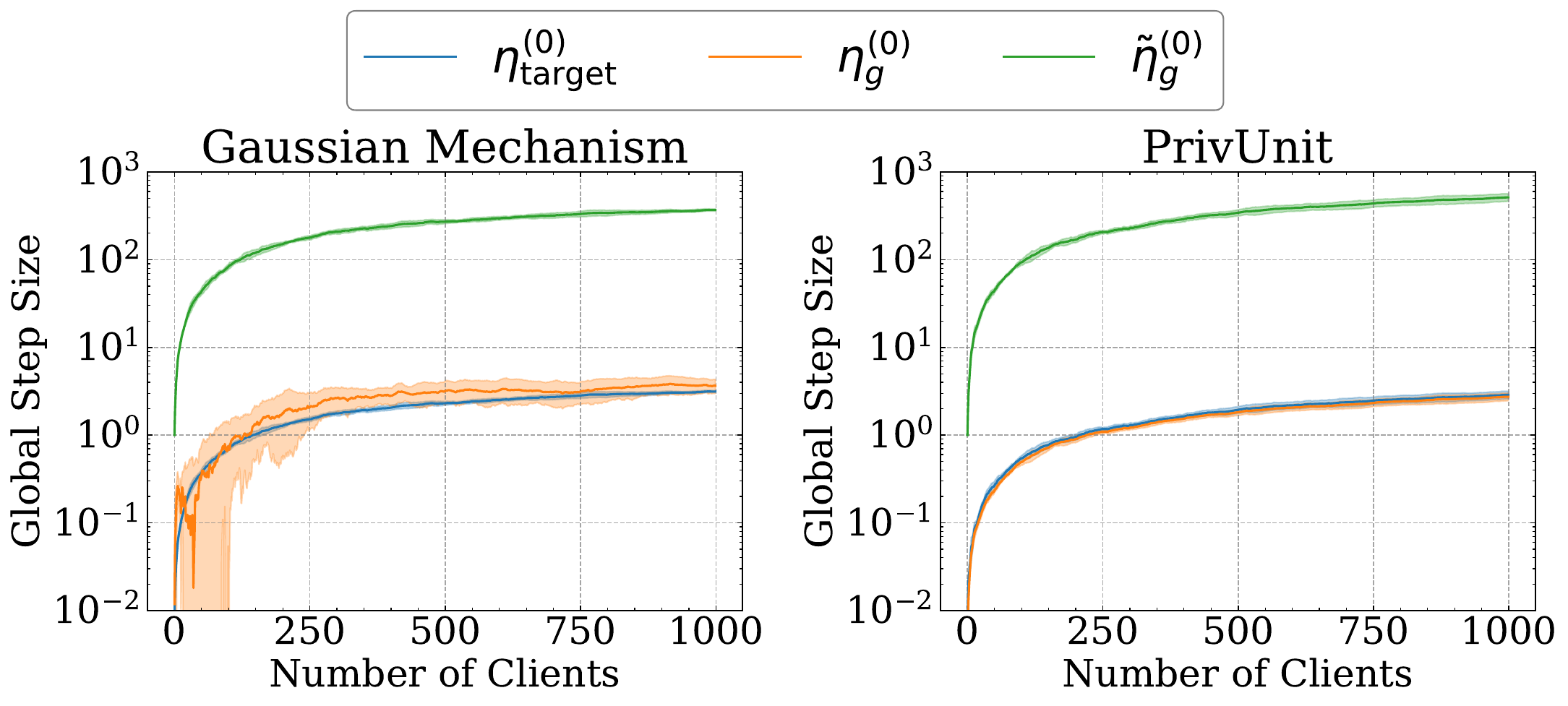}
    \vspace{-0.2cm}
    \caption{The adaptive step size $\eta_g^{(0)}$ at initialization in the LDP setting. Our proposed step size is close to the target step size $\eta^{(0)}_{\mathrm{target}}$ for large $M$
    while the naive step size $\tilde \eta_g^{(0)}$ is extremely large due to the bias in the numerator and the error does not decrease as $M$ increases.}
    \label{fig:eta_g}
\end{figure}


In this section, we evaluate the performance of DP-FedEXP on synthetic and real datasets.
For the synthetic experiment, we consider a linear regression problem, where clients share the common minimizer.
As shown in~\citet{jhunjhunwala2023fedexp}, this setting satisfies the approximate projection condition~\eqref{eq:proj-cond}
and allows us to analyze the convergence of the proposed method.
For the realistic experiment, we consider the image classification task on the MNIST dataset~\citep{lecun1998mnist}.
We compare our proposed method with the baseline algorithms such as DP-FedAvg and DP-SCAFFOLD.
Our framework can be combined with adaptive clipping~\citep{andrew2021differentially} but we use a fixed clipping threshold for simplicity.
For fair comparison, we have tuned the clipping threshold $C$ and the local learning rate $\eta_l$ for each method via grid search.
In both experiments, we run the training for $T = 50$ rounds and set $\sigma = 5 \cdot C / \sqrt{M}, \sigma_\xi = d\sigma^2 / M$ for the CDP case, $\sigma = 0.7 \cdot C$ for the LDP (Gaussian) case, and $\varepsilon_0 = \varepsilon_1 = \varepsilon_2 = 2$ for the LDP (PrivUnit) case.
Following~\citet{jhunjhunwala2023fedexp}, we set the final model as the average of the last 2 iterates to mitigate the effect of oscillating behavior of DP-FedEXP.
For privacy analysis, we utilized the numerical composition~\citep{gopi2021numerical} to tightly audit the privacy leakage.
See Appendix~\ref{appendix:experiments} for the detailed setup and additional results.

\vspace{-0.2cm}
\paragraph{Synthetic Experiment Setup}
First, we generate the target vector $w^* \in \R^d$ according to the standard normal distribution, which is shared among all clients.
Then, we generate the local dataset following a similar procedure in~\citet{li2020federated,jhunjhunwala2023fedexp} with $M = 1000$.
In this experiment, we set $\tau = 20$.
For the CDP setting, we set $d = 500$ while $d = 100$ for the LDP setting since the noise level of LDP is much larger than that of CDP.

\paragraph{Realistic Experiment Setup}
We divide the training data into $M = 1000$ clients according to Dirichlet distribution with $\alpha = 0.3$, following the procedure in~\citet{hsu2019measuring}.
In this experiment, we set $\tau=10$.
For the CDP setting, we use a simple convolutional neural network (CNN) model with two convolutional layers and two fully connected layers.
For LDP setting, we use a small CNN model with two convolutional layers and one fully connected layer.

\begin{table}[t]
    \centering
    \caption{Comparison of the privacy budget $\varepsilon$ for DP-FedEXP and DP-FedAvg.
        We set $\delta = 10^{-5}$ for Gaussian mechanism.
        LDP guarantee is the same for both synthetic and MNIST experiments.
    }
    \label{tab:comparison}
    \begin{tabular}{lcccc}
        \toprule
        Problem setting & DP-FedEXP & DP-FedAvg \\      \midrule
        LDP (Gaussian)  & $15.659$  & $15.659$  \\
        LDP (PrivUnit)  & $6$       & $6$       \\
        CDP (Synthetic) & $15.647$  & $15.258$  \\
        CDP (MNIST)     & $15.261$  & $15.258$  \\
        \bottomrule
    \end{tabular}
\end{table}

\vspace{-0.2cm}
\paragraph{DP-FedEXP consistently outperforms baselines}
Fig.~\ref{fig:ldp} illustrates the mean and standard deviation of the distance to the optimum $w^*$ for the synthetic experiment and the test accuracy for the MNIST experiment over 5 runs with different random seeds.
As discussed in Section~\ref{sec:utility},
DP-FedEXP is expected to converge faster than DP-FedAvg.
Indeed, Fig.~\ref{fig:ldp} illustrates that DP-FedEXP effectively accelerates DP-FedAvg.
In addition, as shown in Table~\ref{tab:comparison},
our proposed methods achieve the same privacy guarantee as DP-FedAvg in the LDP setting and the additional privacy budget in the CDP setting is negligible.
Furthermore, DP-FedEXP consistently outperforms DP-SCAFFOLD.
In our setup, DP-SCAFFOLD does not improve the performance compared to DP-FedAvg except for the case of CDP in the synthetic experiment.
One possible reason is that DP-SCAFFOLD in~\citet{noble2022differentially} is designed for sample-level DP
and the noise scale for client-level DP is much larger than that for sample-level DP.

\vspace{-0.2cm}
\paragraph{The Effect of Bias Correction}
To show the effectiveness of our bias correction scheme in LDP-FedEXP,
we compare the naive step size $\tilde \eta^{(t)}_g$ and the proposed step size $\eta^{(t)}_g$ in Fig.~\ref{fig:eta_g}.
Apparently, the naive step size is extremely large compared to $\eta_{\mathrm{target}}^{(t)}$ in Eq.~\eqref{eq:optimal-eta-g}
and the error does not decrease as the number of clients $M$ increases.
In contrast, the proposed step size is close to $\eta_{\mathrm{target}}^{(t)}$ for large $M$.
In addition, the variance of $\eta_g^{(t)}$ for PrivUnit is much smaller than that for the Gaussian mechanism,
which matches the theoretical analysis in Section~\ref{sec:utility}.

\section{Conclusion}
In this study, we have pursued a practical federated learning framework with formal privacy guarantee.
To this end, we have proposed DP-FedEXP for both LDP and CDP settings,
which adaptively selects the global step size in DP-FL with respect to the heterogeneity of the local updates.
Our proposed framework does not require any additional hyperparameters, additional communication cost or additional computational cost at clients.
Then, we have proved differential privacy guarantee and provided the convergence analysis of our proposed methods.
We have shown that DP-FedEXP provably accelerates DP-FedAvg while maintaining the privacy guarantee.
Finally, we have shown that our proposed methods outperform existing DP-FL algorithms in the numerical experiments.


\bibliography{main}
\bibliographystyle{icml2025}

\newpage
\appendix
\onecolumn

\section{Auxiliary Results} \label{appendix:auxiliary}
\begin{lemma}[Gaussian tail bound]\label{lemma:gaussian-tail}
    Let $X$ be a random variable following the Gaussian distribution $\mathcal{N}(0, \sigma^2)$.
    Then, for any $q > 0$, we have
    \begin{align*}
        X & \leq \sigma q \quad \text{with probability at least } 1 - e^{-q^2 / 2}.
    \end{align*}
\end{lemma}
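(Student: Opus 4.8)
The plan is to reduce to the standard normal and then run a Chernoff-type (exponential Markov) argument. First I would write $X = \sigma Z$ with $Z \sim \mathcal{N}(0,1)$, so that the claim $\Pr[X \leq \sigma q] \geq 1 - e^{-q^2/2}$ becomes equivalent to the one-sided tail bound $\Pr[Z > q] \leq e^{-q^2/2}$.

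Next I would recall the moment generating function of the standard normal, namely $\Expec{e^{\lambda Z}} = e^{\lambda^2/2}$ for every real $\lambda$, obtained by completing the square inside the Gaussian integral. Then for any $\lambda > 0$, Markov's inequality applied to the nonnegative random variable $e^{\lambda Z}$ gives
\[
    \Pr[Z > q] = \Pr[e^{\lambda Z} > e^{\lambda q}] \leq e^{-\lambda q}\,\Expec{e^{\lambda Z}} = e^{-\lambda q + \lambda^2/2}.
\]
Minimizing the exponent $-\lambda q + \lambda^2/2$ over $\lambda > 0$ — the minimizer is $\lambda = q$, which is feasible precisely because $q > 0$ — yields $\Pr[Z > q] \leq e^{-q^2/2}$. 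Passing to complements gives $\Pr[Z \leq q] \geq 1 - e^{-q^2/2}$, and undoing the scaling $X = \sigma Z$ recovers the stated bound.

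There is essentially no obstacle here: this is a classical sub-Gaussian tail estimate, and the argument is entirely routine. The only points deserving a moment of care are verifying that the optimal $\lambda = q$ lies in the admissible range $(0,\infty)$ (which holds because $q > 0$) and keeping the inequality direction consistent when taking complements; note also that the bound is trivially true for small $q$, since then $e^{-q^2/2}$ is close to $1$ while $\Pr[Z > q] < 1/2$.
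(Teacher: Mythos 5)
Your argument is correct and is essentially the paper's: the paper simply invokes the Hoeffding/sub-Gaussian tail bound $\Pr[X>t]\le e^{-t^2/2\sigma^2}$ (citing Wainwright) and sets $t=\sigma q$, while you prove that same bound from scratch via the standard Chernoff/MGF optimization. No gap.
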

\begin{proof}
    From the Hoeffding bound~\cite{wainwright2019high}, we obtain
    \begin{align*}
        \Prob{X > t} & \leq e^{-t^2 / 2\sigma^2}.
    \end{align*}
    Setting $t = \sigma q$ completes the proof.
\end{proof}
\begin{lemma}[Tail bound for norm of Gaussian]\label{lemma:chi-squared-tail}
    Let $x_i \in \R^d$ be a random variable following the Gaussian distribution $\mathcal{N}(0, \sigma^2 I_d)$.
    Then, for any $q \geq 1$, we have
    \begin{align*}
        \frac{1}{n} \sum_{i=1}^n \norm{x_i}^2 - d \sigma^2 & \leq \sqrt{\frac{d}{n}} \sigma^2 \cdot q^2 \quad \text{with probability at least } 1 - e^{-q^2 / 8}.
    \end{align*}
\end{lemma}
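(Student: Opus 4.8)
The plan is to reduce the claim to a standard one-sided concentration inequality for a chi-squared random variable. Since the $x_i$ are independent $\mathcal{N}(0,\sigma^2 I_d)$ vectors, stacking all of their coordinates together produces $nd$ i.i.d.\ $\mathcal{N}(0,\sigma^2)$ scalars, so $S := \sigma^{-2}\sum_{i=1}^n \norm{x_i}^2$ is distributed as $\chi^2_{nd}$. Multiplying through by $\sigma^2/n$, the event in the lemma is precisely $\{\,S - nd \le \sqrt{nd}\,q^2\,\}$, so it suffices to show $\Pr[\,S - nd \ge \sqrt{nd}\,q^2\,] \le e^{-q^2/8}$.

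First I would invoke the standard sub-exponential tail bound for chi-squared variables (see, e.g., \citet{wainwright2019high}): for $Y \sim \chi^2_k$ and any $x > 0$, $\Pr[\,Y - k \ge 2\sqrt{kx} + 2x\,] \le e^{-x}$. Taking $k = nd$ and $x = q^2/8$ gives $\Pr[\,S - nd \ge 2\sqrt{nd\,q^2/8} + q^2/4\,] \le e^{-q^2/8}$, where $2\sqrt{nd\,q^2/8} = q\sqrt{nd}/\sqrt{2}$.

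It then remains to verify the elementary inequality $q\sqrt{nd}/\sqrt{2} + q^2/4 \le \sqrt{nd}\,q^2$, which is the only computation involved. Since $q \ge 1$ we have $q \le q^2$, hence $q\sqrt{nd}/\sqrt{2} \le q^2\sqrt{nd}/\sqrt{2}$; and since $nd \ge 1$ we have $q^2/4 \le q^2\sqrt{nd}/4$. Adding these, the left-hand side is at most $q^2\sqrt{nd}\,(1/\sqrt{2} + 1/4) < q^2\sqrt{nd}$ because $1/\sqrt{2} + 1/4 < 1$. This establishes $\Pr[\,S - nd \ge \sqrt{nd}\,q^2\,] \le e^{-q^2/8}$, and rescaling the complementary event by $\sigma^2/n$ gives $\frac{1}{n}\sum_{i=1}^n \norm{x_i}^2 - d\sigma^2 \le \sigma^2 q^2 \sqrt{d/n}$ with probability at least $1 - e^{-q^2/8}$.

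I do not expect any real obstacle: the only subtlety is choosing the constant $x = q^2/8$ so that the exponent matches the claimed $e^{-q^2/8}$ while still leaving room for the slack inequality above. If one prefers a self-contained argument rather than citing the chi-squared tail bound, one can instead bound the moment generating function $\mathbb{E}[e^{\lambda(S - nd)}] = ((1-2\lambda)e^{2\lambda})^{-nd/2}$ for $\lambda \in (0, 1/2)$ and optimize over $\lambda$ via a Chernoff argument, which reproduces the same sub-exponential tail with slightly different constants.
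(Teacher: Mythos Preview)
Your proof is correct and follows essentially the same approach as the paper: both reduce the statement to a one-sided chi-squared tail bound, with the paper deriving the sub-exponential MGF explicitly and applying the two-regime sub-exponential tail, while you cite the Laurent--Massart form directly and then verify a small slack inequality. The only cosmetic difference is that you aggregate all $nd$ coordinates into a single $\chi^2_{nd}$ variable rather than summing $n$ independent $\chi^2_d$ variables as the paper does, which of course gives the same distribution.
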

\begin{proof}
    It is sufficient to consider the case of $\sigma^2 = 1$ by scaling $x_i$ with $1 / \sigma$.
    Since $Z_i := \norm{x_i}^2$ follows the $\chi^2$-distribution with $d$ degrees of freedom, we have
    \begin{align*}
        \Expec{e^{\lambda (Z_i - d)}}
         & = e^{-d \lambda} \cdot \ab[\int e^{\lambda X^2} \frac{1}{\sqrt{2\pi }} e^{-X^2 / 2} \mathrm{d} X]^d \\
         & = e^{-d \lambda} \cdot \ab[\frac{1}{\sqrt{1 - 2\lambda}}]^d                                         \\
         & \leq e^{-2d \lambda^2}      \quad \text{for any } \abs{\lambda} \leq 1/4.
    \end{align*}
    Thus, $\sum_{i=1}^n Z_i$ is subexponential random variable with parameters $(\nu^2, b) = (4dn, 4)$ and satisfies
    \begin{align*}
        \Prob{\sum_{i=1}^n Z_i - dn \geq t} & \leq \begin{cases}
                                                       \exp\ab(-\frac{t^2}{8dn}) & \text{for } t \in (0, dn), \\
                                                       \exp\ab(-\frac{t}{8})     & \text{otherwise}.
                                                   \end{cases}
    \end{align*}
    Setting $t = q^2 \cdot \sqrt{dn}$, we obtain
    \begin{align*}
        \Prob{\frac{1}{n}\sum_{i=1}^n Z_i - d \geq \sqrt{\frac{d}{n}} \cdot q^2}
         & \leq \begin{cases}
                    \exp\ab(-q^4 / 8)       & \text{for } t \in (0, \sqrt{dn}), \\
                    \exp\ab(-\frac{q^2}{8}) & \text{otherwise}.
                \end{cases} \\
         & \leq \exp\ab(-\frac{q^2}{8}) \quad \text{for any } q \geq 1.
    \end{align*}
    This completes the proof.
\end{proof}

\begin{lemma}[Vector Bernstein Inequality]\label{lemma:vec-bernstein}
    Let $x_1, \dots, x_n \in \R^d$ be independent zero-mean random variables.
    Assume that $\norm{x_i} \leq R$ almost surely for any $i$.
    Then, for any $q \in [0, \sqrt{n}]$, we have
    \begin{align*}
        \Prob{\norm{\frac{1}{n}\sum_{i=1}^n x_i} \geq \frac{R(1 + q)}{\sqrt{n}}} \leq \exp\ab(-\frac{q^2}{4}).
    \end{align*}
\end{lemma}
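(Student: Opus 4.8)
The plan is to prove the Vector Bernstein Inequality by reducing the $d$-dimensional statement to a scalar concentration problem and then applying a standard Bernstein-type bound. First I would control the expected squared norm: since the $x_i$ are independent and zero-mean, $\mathbb{E}\norm{\frac1n\sum_i x_i}^2 = \frac{1}{n^2}\sum_i \mathbb{E}\norm{x_i}^2 \le R^2/n$, so by Jensen $\mathbb{E}\norm{\frac1n\sum_i x_i} \le R/\sqrt n$. This gives the ``$1$'' in the numerator. The remaining task is to show the deviation above this mean is at most $Rq/\sqrt n$ with the claimed probability.

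Next I would set $Z := \norm{\sum_{i=1}^n x_i}$ and argue that $Z$ concentrates around its mean. The cleanest route is to observe that $Z$, viewed as a function of the independent vectors $x_1,\dots,x_n$, is $2R$-Lipschitz under coordinatewise bounded changes: replacing $x_i$ by $x_i'$ changes $\norm{\sum_j x_j}$ by at most $\norm{x_i - x_i'} \le 2R$ by the triangle inequality. Hence by McDiarmid's bounded-differences inequality, $\Prob{Z \ge \mathbb{E}[Z] + t} \le \exp(-t^2/(2n(2R)^2)) = \exp(-t^2/(8nR^2))$. Combining with $\mathbb{E}[Z] \le \sqrt n R$ and dividing through by $n$, with $t = q R\sqrt n$ one gets $\Prob{\norm{\frac1n\sum_i x_i} \ge (1+q)R/\sqrt n} \le \exp(-q^2/8)$. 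This is weaker than the stated $\exp(-q^2/4)$, so to hit the exact constant I would instead invoke the vector-valued Bernstein/Freedman inequality from the high-dimensional probability literature (e.g. the form in \citet{wainwright2019high}), which for bounded increments with $\norm{x_i}\le R$ gives exactly $\Prob{\norm{\frac1n\sum x_i} \ge R(1+q)/\sqrt n}\le \exp(-q^2/4)$ on the range $q \in [0,\sqrt n]$; the restriction $q \le \sqrt n$ is precisely what keeps the sub-Gaussian regime of the Bernstein bound active rather than the sub-exponential tail.

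The main obstacle is matching the precise constant $1/4$ in the exponent and the exact range $q \in [0,\sqrt n]$: a generic McDiarmid or a crude symmetrization argument loses a factor in the exponent, so the proof must either cite the sharp vector Bernstein inequality directly or carefully track the variance proxy through the moment-generating-function computation (bounding $\mathbb{E}\exp(\lambda(Z - \mathbb{E}Z))$ using that the sum of squared norms has variance proxy $nR^2$). I expect the cleanest writeup simply states the bound as a known consequence of vector Bernstein with a citation, mirroring how Lemmas~\ref{lemma:gaussian-tail} and~\ref{lemma:chi-squared-tail} are handled, and notes that the $q\le\sqrt n$ hypothesis ensures the $t = qR\sqrt n$ deviation lies within the sub-Gaussian portion $t \le nR$ of the two-regime Bernstein tail.
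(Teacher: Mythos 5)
Your proposal is correct and follows essentially the same route as the paper: the paper also obtains the ``$1+q$'' by bounding $\sqrt{V}=\bigl(\sum_i\mathbb{E}\|x_i\|^2\bigr)^{1/2}\le \sqrt{n}R$ and then cites a sharp vector Bernstein inequality (Theorem 12 of Gross, 2011) giving $\Pr[\|\sum_i x_i\|\ge\sqrt{V}+t]\le\exp(-t^2/4V)$ for $t\le V/R$, with $t=\sqrt{n}Rq$ and $q\le\sqrt{n}$ keeping the deviation in the sub-Gaussian regime. Your McDiarmid detour (losing a factor of $2$ in the exponent) is correctly flagged by you as insufficient, and your fallback---citing the sharp vector Bernstein bound---is exactly what the paper does.
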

\begin{proof}
    Let $V = \sum_{i=1}^n \Expec{\norm{x_i}^2}$.
    Note that $V \leq nR^2$ since $\norm{x_i} \leq R$ almost surely.
    Then, Theorem 12 in~\citet{gross2011recovering} implies
    \begin{align*}
        \Prob{\norm{\sum_{i=1}^n x_i} \geq \sqrt{n} R + t} & \leq \Prob{\norm{\sum_{i=1}^n x_i} \geq \sqrt{V} + t} \leq \exp\ab(-\frac{t^2}{4V})
    \end{align*}
    for any $t \in [0, V/R]$.
    Setting $t = \sqrt{n}R q$, we obtained
    \begin{align*}
        \Prob{\norm{\frac{1}{n}\sum_{i=1}^n x_i} \geq \frac{R(1 + q)}{\sqrt{n}}} = \Prob{\norm{\sum_{i=1}^n x_i} \geq \sqrt{n}R(1 + q)} \leq \exp\ab(-\frac{nR^2 q^2}{4V}) \leq \exp\ab(-\frac{q^2}{4})
    \end{align*}
    for any $q \in [0, \sqrt{n}]$.
\end{proof}

\begin{lemma}\label{lemma:decrease}
    Assume that the generalized approximate projection condition Eq.~\eqref{eq:proj-cond} holds.
    Then, for any $\eta_g > 0$, we have
    \begin{align*}
        \norm{w^{(t + 1)} - w^*}^2 & = (1 - \alpha \eta_g)\norm{w^{(t)} - w^*}^2 - \eta_g \frac{1}{M} \sum_{i=1}^M \norm{\Delta^{(t)}_i}^2 + \eta_g^2 \norm{\bar c^{(t)}}^2 + O\ab(\frac{\eta_g \cdot \sqrt{\frac{d}{M} \sigma^2} \cdot \norm{w^{(t)} - w^*}}{\sqrt{d}} \cdot q),
    \end{align*}
    with probability at least $1 - e^{-q^2 / 2}$ for any $q > 0$.
\end{lemma}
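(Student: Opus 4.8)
The plan is to expand $\norm{w^{(t+1)}-w^*}^2$ directly using the global update rule $w^{(t+1)} = w^{(t)} + \eta_g \bar c^{(t)}$, and then use the approximate projection condition to control the cross term. First I would write
\[
\norm{w^{(t+1)}-w^*}^2 = \norm{w^{(t)}-w^*}^2 + 2\eta_g \langle \bar c^{(t)}, w^{(t)}-w^*\rangle + \eta_g^2 \norm{\bar c^{(t)}}^2.
\]
The main work is to rewrite the inner product $\langle \bar c^{(t)}, w^{(t)}-w^*\rangle$. Since $\bar c^{(t)} = \frac{1}{M}\sum_i \Delta_i^{(t)} + (\text{noise})$ and (ignoring clipping, as the statement permits) $\Delta_i^{(t)} = w_i^{(t,\tau)} - w^{(t)}$, I would decompose $\langle \bar \Delta^{(t)}, w^{(t)}-w^*\rangle$ using the identity $\langle a-b, b-c\rangle = \tfrac12(\norm{a-c}^2 - \norm{a-b}^2 - \norm{b-c}^2)$ applied per client with $a = w_i^{(t,\tau)}$, $b = w^{(t)}$, $c = w^*$:
\[
\frac{1}{M}\sum_{i=1}^M \langle \Delta_i^{(t)}, w^{(t)}-w^*\rangle = \frac{1}{2}\Big[\frac{1}{M}\sum_i \norm{w_i^{(t,\tau)}-w^*}^2 - \norm{w^{(t)}-w^*}^2 - \frac{1}{M}\sum_i \norm{\Delta_i^{(t)}}^2\Big].
\]
Now I would substitute the approximate projection condition~\eqref{eq:proj-cond}, which replaces $\frac{1}{M}\sum_i \norm{w_i^{(t,\tau)}-w^*}^2$ by $(1-\alpha)\norm{w^{(t)}-w^*}^2$, so the bracket collapses to $-\alpha\norm{w^{(t)}-w^*}^2 - \frac{1}{M}\sum_i\norm{\Delta_i^{(t)}}^2$. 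Multiplying by $2\eta_g$ and combining gives the first three terms of the claimed identity: $(1-\alpha\eta_g)\norm{w^{(t)}-w^*}^2 - \eta_g \frac{1}{M}\sum_i \norm{\Delta_i^{(t)}}^2 + \eta_g^2\norm{\bar c^{(t)}}^2$.

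The remaining step is to handle the noise contribution to the inner product. Writing $\bar c^{(t)} = \bar\Delta^{(t)} + N^{(t)}$ where $N^{(t)}$ is the aggregate DP noise — $\frac{1}{M}\sum_i \varepsilon_i^{(t)}$ with $\varepsilon_i^{(t)}\sim\mathcal{N}(0,\sigma^2 I_d)$ in the LDP/Gaussian case, which is distributed as $\mathcal{N}(0, \tfrac{\sigma^2}{M} I_d)$ — the extra term is $2\eta_g\langle N^{(t)}, w^{(t)}-w^*\rangle$. Conditioned on $w^{(t)}$, this is a one-dimensional Gaussian with standard deviation $\sqrt{d\sigma^2/M}\cdot\norm{w^{(t)}-w^*}/\sqrt{d}$ in the relevant normalization; applying the Gaussian tail bound (Lemma~\ref{lemma:gaussian-tail}) to the projection of $N^{(t)}$ onto the unit vector $(w^{(t)}-w^*)/\norm{w^{(t)}-w^*}$ yields $|\langle N^{(t)}, w^{(t)}-w^*\rangle| \le \sqrt{\sigma^2/M}\,\norm{w^{(t)}-w^*}\, q$ with probability at least $1-e^{-q^2/2}$, which is exactly the $O$-term stated (the $\sqrt{d}$ factors cancel to match the paper's chosen normalization). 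The $\eta_g^2\norm{\bar c^{(t)}}^2$ term is kept intact rather than expanded, so no further noise bookkeeping is needed there.

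The main obstacle, and the only place requiring care, is the claim that clipping can be "ignored for simplicity": strictly the identity holds exactly only when $\tilde\Delta_i^{(t)} = \Delta_i^{(t)}$, so I would either state the lemma as holding for the unclipped dynamics (consistent with the surrounding motivational discussion) or absorb the clipping discrepancy into the $O$-term — but since the theorem statements later take $C = \eta_l\tau G$ precisely so that clipping is inactive under the Lipschitz assumption, I expect the clean version suffices here. A secondary subtlety is that $\eta_g^{(t)}$ itself depends on $\bar c^{(t)}$ and hence on $N^{(t)}$; but because the lemma is stated for an arbitrary fixed $\eta_g > 0$ (treating it as deterministic), this correlation is deferred to the full convergence proof and does not affect the lemma. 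Everything else is the algebra above plus one invocation of Lemma~\ref{lemma:gaussian-tail}.
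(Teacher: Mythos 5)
Your proposal is correct and follows essentially the same route as the paper: expand $\norm{w^{(t)}+\eta_g\bar c^{(t)}-w^*}^2$, use the projection condition to rewrite $\frac{2}{M}\sum_i\langle \Delta_i^{(t)},w^{(t)}-w^*\rangle$ as $-\alpha\norm{w^{(t)}-w^*}^2-\frac{1}{M}\sum_i\norm{\Delta_i^{(t)}}^2$ (your polarization identity is the same algebra as the paper's direct expansion of $\norm{w^{(t)}+\Delta_i^{(t)}-w^*}^2$), and control $2\eta_g\langle\bar\varepsilon^{(t)},w^{(t)}-w^*\rangle$ as a one-dimensional Gaussian of variance $O(\eta_g^2\sigma^2\norm{w^{(t)}-w^*}^2/M)$ via Lemma~\ref{lemma:gaussian-tail}. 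Your side remarks on clipping and on treating $\eta_g$ as fixed match the paper's own simplifications.
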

\begin{proof}
    From the generalized approximate projection condition Eq.~\eqref{eq:proj-cond}, we have
    \begin{align*}
        \frac{1}{M} \sum_{i=1}^n \norm{w^{(t)} + \Delta^{(t)}_i - w^*}^2 & = \norm{w^{(t)} - w^*}^2 + \frac{2}{M} \sum_{i=1}^M \langle w^{(t)} - w^*, \Delta^{(t)}_i\rangle + \frac{1}{M}\sum_{i=1}^n \norm{\Delta^{(t)}_i}^2 \\
                                                                         & = (1 - \alpha) \norm{w^{(t)} - w^*}^2.
    \end{align*}
    This implies
    \begin{align*}
        \frac{2}{M} \sum_{i=1}^M \langle w^{(t)} - w^*, \Delta^{(t)}_i\rangle & =-\alpha \norm{w^{(t)} - w^*}^2 - \frac{1}{M} \sum_{i=1}^M \norm{\Delta^{(t)}_i}^2.
    \end{align*}
    Substituting the above equation, we obtain
    \begin{align*}
        \norm{w^{(t)} + \eta_g c^{(t)} - w^*}^2 & = \norm{w^{(t)} - w^*}^2 + \frac{2 \eta_g}{M} \sum_{i=1}^M\langle \Delta_i^{(t)}, w^{(t)} - w^* \rangle + 2 \eta_g \langle \bar \varepsilon^{(t)}, w^{(t)} - w^* \rangle + \eta_g^2 \norm{\bar c^{(t)}}^2 \\
                                                & =(1 - \alpha \eta_g) \norm{w^{(t)} - w^*}^2 - \frac{\eta_g}{M} \sum_{i=1}^M \norm{\Delta_i^{(t)}}^2 + \eta_g^2 \norm{c^{(t)}}^2 + O\ab(\frac{\eta_g \sigma\norm{w^{(t)} - w^*}}{\sqrt{M}} \cdot q),
    \end{align*}
    with probability at least $1 - e^{-q^2 / 2}$ for any $q > 0$.
    Here, we used the fact that $2 \eta_g \langle \bar \varepsilon^{(t)}, w^{(t)} - w^* \rangle$ follows $\mathcal{N}(0, \eta_g^2 \sigma^2 \norm{w^{(t)} - w^*}^2 / M)$
    and Lemma~\ref{lemma:gaussian-tail}.
    This completes the proof.
\end{proof}

\section{Brief review of PrivUnit}
\begin{algorithm}[tbh]
    \caption{PrivUnit}
    \label{alg:privunit}
    \begin{algorithmic}
        \STATE {\bfseries Input:} $u \in \mathbb{S}^{d-1}, \varepsilon_0, \varepsilon_1 > 0$
        \STATE {\bfseries Output:} Randomized vector $Z \in \mathbb{R}^d$
        \STATE $p \leftarrow \frac{e^{\varepsilon_0}}{1 + e^{\varepsilon_0}}$
        \STATE Select $\gamma$ such that
        \begin{align*}
            \gamma        & \leq \frac{e^{\varepsilon_1} - 1}{e^{\varepsilon_1} + 1} \sqrt{\frac{\pi}{2(d - 1)}},                                            \\
                          & \text{or}                                                                                                                        \\
            \varepsilon_1 & \geq \frac{1}{2} \log d + \log 6 - \frac{d - 1}{2} \log (1 - \gamma^2) + \log \gamma \text{ and } \gamma \geq \sqrt{\frac{2}{d}}
        \end{align*}
        \STATE Draw random vector $V$ according to the following distribution:
        \begin{align*}
            V \leftarrow \begin{cases}
                             \text{uniform on } \{v \in \mathbb{S}^{d - 1} \mid \langle v, u \rangle \geq \gamma \} & \text{w.p. } \gamma, \\
                             \text{uniform on } \{v \in \mathbb{S}^{d - 1} \mid \langle v, u \rangle < \gamma \}    & \text{otherwise}.
                         \end{cases}
        \end{align*}
        \STATE $\alpha \leftarrow \frac{d - 1}{2}, \tau=\frac{1 + \gamma}{2}$, and
        \begin{align*}
            m \leftarrow \frac{(1 - \gamma^2)^\alpha}{2^{d - 2}(d - 1)} \ab[\frac{p}{B(\alpha, \alpha) - B(\tau;\alpha, \alpha)} - \frac{1-p}{B(\tau;\alpha, \alpha)}]
        \end{align*}
        \STATE Rescale $V$ as $Z \leftarrow \frac{1}{m} \cdot V$
    \end{algorithmic}
\end{algorithm}

\begin{algorithm}[tbh]
    \caption{ScalarDP}
    \label{alg:scalardp}
    \begin{algorithmic}
        \STATE {\bfseries Input:} magnitude $r \in [0, C]$, privacy parameter $\varepsilon_2 > 0$
        \STATE {\bfseries Output:} Randomized magnitude $\hat r$
        \STATE $k \leftarrow e^{\ceil{\varepsilon_2 / 3}}$
        \STATE $r_{\max} \leftarrow C$
        \STATE Sample $J \in \{0, \dots, k\}$ according to the following distribution:
        \begin{align*}
            J \leftarrow \begin{cases}
                             \floor{kr/r_{\max}} & \text{w.p. } \ceil{kr/r_{\max}} - kr/r_{\max}, \\
                             \ceil{kr/r_{\max}}  & \text{otherwise}.
                         \end{cases}
        \end{align*}
        Draw randomized response $\hat J$ according to the following distribution:
        \begin{align*}
            \hat J \leftarrow \begin{cases}
                                  J                                                   & \text{w.p. } \frac{e^{\varepsilon_2}}{e^{\varepsilon_2} + k}, \\
                                  \text{uniform on } \{0, \dots, k\} \backslash \{J\} & \text{otherwise}.
                              \end{cases}
        \end{align*}
        \STATE Debias $\hat r$ as $\hat r \leftarrow a(\hat J - b)$, where $a = \ab(\frac{e^{\varepsilon_2} + k}{e^{\varepsilon_2} - 1}) \frac{r_{\max}}{k}$ and $b = \frac{k(k + 1)}{2(e^{\varepsilon_2} + k)}$
    \end{algorithmic}
\end{algorithm}

Here, we briefly explain PrivUnit and ScalarDP algorithms proposed by~\citet{bhowmick2018protection}.
We provide the detailed description of the algorithms in Algorithm~\ref{alg:privunit} and~\ref{alg:scalardp}.
As shown in~\citet{bhowmick2018protection}, the product of PrivUnit and ScalarDP is an unbiased estimator of the original vector and
provide the formal privacy guarantee.
\begin{lemma}\label{lemma:privunit-unbiased}
    For $\varepsilon_0, \varepsilon_1, \varepsilon_2 \in [0, d]$,
    $c = \privunit(\Delta / \norm{\Delta}; \varepsilon_0, \varepsilon_1) \cdot \scalardp(\norm{\Delta};\varepsilon_2)$
    is an unbiased estimator of $\Delta$ if $\norm{\Delta} \leq C$. That is, $E[c] = \Delta$.
    Moreover, $c$ satisfies $(\varepsilon_0 + \varepsilon_1 + \varepsilon_2)$-DP.
\end{lemma}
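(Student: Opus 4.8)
The statement has two components—unbiasedness and $(\varepsilon_0+\varepsilon_1+\varepsilon_2)$-DP—and my plan is to reduce each to the PrivUnit$_2$/ScalarDP analysis of \citet{bhowmick2018protection} together with standard composition and post-processing bookkeeping. For unbiasedness, note that $\privunit$ acts on the direction $\Delta/\norm{\Delta}$ and $\scalardp$ acts on the norm $\norm{\Delta}$ using independent internal randomness, so it suffices to establish $\mathbb{E}[\privunit(u;\varepsilon_0,\varepsilon_1)] = u$ for every $u\in\mathbb{S}^{d-1}$ and $\mathbb{E}[\scalardp(r;\varepsilon_2)] = r$ for every $r\in[0,C]$; then $\mathbb{E}[c] = \mathbb{E}[\scalardp(\norm{\Delta};\varepsilon_2)]\cdot\mathbb{E}[\privunit(\Delta/\norm{\Delta};\varepsilon_0,\varepsilon_1)] = \norm{\Delta}\cdot(\Delta/\norm{\Delta}) = \Delta$, where the hypothesis $\norm{\Delta}\le C$ is used precisely to guarantee that the norm lies in the admissible input range $[0,r_{\max}]$ of Algorithm~\ref{alg:scalardp}.

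For the PrivUnit step I would argue by rotational symmetry: both the cap $\{v:\langle v,u\rangle\ge\gamma\}$ and its complement on $\mathbb{S}^{d-1}$ are invariant under rotations fixing the axis $u$, so the drawn vector $V$ satisfies $\mathbb{E}[V] = \kappa\,u$ for a scalar $\kappa$ depending only on $d,\gamma,p$; a direct spherical integration (expressible via incomplete Beta functions, as in \citet{bhowmick2018protection}) evaluates $\kappa$, and one checks that the rescaling constant $m$ in Algorithm~\ref{alg:privunit} equals exactly this $\kappa$, so $\mathbb{E}[Z] = \mathbb{E}[V]/m = u$. For the ScalarDP step I would use a two-stage conditional expectation: the randomized-rounding rule gives $\mathbb{E}[J\mid r] = kr/r_{\max}$, the $k$-ary randomized response gives $\mathbb{E}[\hat J\mid J] = \tfrac{e^{\varepsilon_2}-1}{e^{\varepsilon_2}+k}J + \tfrac{k(k+1)}{2(e^{\varepsilon_2}+k)}$, and applying the affine debiasing $\hat r = a(\hat J - b)$ with the stated $a$ and $b$ cancels the constant term and rescales the linear term to yield $\mathbb{E}[\hat r] = r$.

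For privacy I would show $\privunit$ is $(\varepsilon_0+\varepsilon_1)$-DP on $\mathbb{S}^{d-1}$ and $\scalardp$ is $\varepsilon_2$-DP on $[0,C]$, then combine. ScalarDP: for any two magnitudes $r,r'$, the conditional law of $\hat J$ is a mixture over $J$ of the randomized-response kernel, whose pointwise mass lies in $[\tfrac{1}{e^{\varepsilon_2}+k},\tfrac{e^{\varepsilon_2}}{e^{\varepsilon_2}+k}]$ irrespective of $J$, so the ratio of output distributions is at most $e^{\varepsilon_2}$, and the deterministic debiasing $\hat J\mapsto\hat r$ is post-processing. PrivUnit: the cap-membership randomization is a two-point randomized response with odds $p/(1-p)=e^{\varepsilon_0}$, contributing $\varepsilon_0$, while the uniform-on-(complement-of-)cap step contributes $\varepsilon_1$ since the ratio of output densities over $\mathbb{S}^{d-1}$ between two input directions is controlled by the constraints relating $\gamma$ to $\varepsilon_1$ (the two feasibility conditions in Algorithm~\ref{alg:privunit}); pure-DP composition of the two steps gives $(\varepsilon_0+\varepsilon_1)$. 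Finally, $z=\privunit(\Delta/\norm{\Delta};\varepsilon_0,\varepsilon_1)$ and $\hat r=\scalardp(\norm{\Delta};\varepsilon_2)$ use independent randomness, so releasing the pair $(z,\hat r)$ is $(\varepsilon_0+\varepsilon_1+\varepsilon_2)$-DP by basic composition of pure DP, and since $c=\hat r\cdot z$ is a function of that pair, post-processing yields the claim; the hypothesis $\varepsilon_0,\varepsilon_1,\varepsilon_2\in[0,d]$ is what makes the choice of $\gamma$ feasible and the debiasing well-defined.

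The one genuinely technical point—hence the part I would simply cite from \citet{bhowmick2018protection} rather than re-derive—is the $\varepsilon_1$ component of the PrivUnit guarantee: bounding the output density ratio over the sphere for two unit-vector inputs reduces to estimating the ratio of spherical-cap volumes as a function of the threshold $\gamma$, and verifying that the feasibility conditions on $\gamma$ force this ratio below $e^{\varepsilon_1}$. Everything else—the symmetry argument for PrivUnit unbiasedness, the ScalarDP moment computation, and the composition/post-processing steps—is routine.
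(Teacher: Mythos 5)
Your proposal is correct and takes essentially the same route as the paper: the paper's entire proof of this lemma is a citation to Proposition~3 and Lemma~4.1 of \citet{bhowmick2018protection}, and your sketch is a faithful expansion of exactly that argument (independence of the two randomizers plus the individual unbiasedness computations for PrivUnit and ScalarDP, then pure-DP composition and post-processing), deferring only the $\varepsilon_1$ density-ratio bound to the same citation. Nothing to fix.
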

\begin{proof}
    See Proposition 3 and Lemma 4.1 in~\citet{bhowmick2018protection} for the proof.
\end{proof}

In the following, we prove some properties of PrivUnit and norm estimation procedure in Algorithm~\ref{alg:norm-privunit} for the convergence analysis.
\begin{lemma}\label{lemma:privunit-norm}
    Assume that $\frac{k(k + 1)}{e^{\varepsilon_2} + k} \notin \Z$.
    Then, the estimated value $\hat s$ computed by Algorithm~\ref{alg:norm-privunit} satisfies $E[\hat s] \leq r^2$.
\end{lemma}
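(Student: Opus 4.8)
The plan is to reduce the claim $E[\hat s]\le r^2$ to a second-moment bound on the ScalarDP output $\hat r:=\scalardp(r;\varepsilon_2)$, and then verify that bound by a direct computation whose constants have been reverse-engineered into the definition of $c_3$. First I would check that, under the hypothesis, Algorithm~\ref{alg:norm-privunit} recovers $\hat r$ exactly from the noisy update $c$: since $\privunit$ outputs a vector of deterministic norm $1/m$, we have $\tilde r=m\norm{c}=|\hat r|$, and writing $\hat r=a(\hat J-b)$ with $\hat J\in\{0,\dots,k\}$ the stochastic rounding of $kr/C$ as in Algorithm~\ref{alg:scalardp}, the test quantity $\tilde J=\tilde r/a+b$ equals $\hat J\in\Z$ when $\hat r\ge 0$ and equals $2b-\hat J\notin\Z$ when $\hat r<0$ --- the latter precisely because $2b=\frac{k(k+1)}{e^{\varepsilon_2}+k}\notin\Z$ (which also forces $b\notin\Z$, hence $\hat r\ne 0$ always). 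So the integrality test recovers the sign, the reconstruction is exact, and $\hat s=\frac{1}{1+c_1}(\hat r^2-c_2\hat r-c_3)$ is a function of $\hat r$ alone. Using $E[\hat r]=r$ (which follows from the debiasing step, cf.\ Lemma~\ref{lemma:privunit-unbiased}) and $c_2=-c_1C$, the inequality $E[\hat s]\le r^2$ is equivalent, since $\mathrm{Var}(\hat r)=E[\hat r^2]-r^2$, to
\begin{align*}
\mathrm{Var}(\hat r)\ \le\ c_1 r^2+c_2 r+c_3\ =\ c_1(r^2-Cr)+c_3 .
\end{align*}

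Next I would compute $\mathrm{Var}(\hat r)$ explicitly. Put $x:=kr/C=E[J]$ and let $f\in[0,1)$ be the fractional part of $x$, so $\mathrm{Var}(J)=f(1-f)$. The randomized-response step of Algorithm~\ref{alg:scalardp} can be realized as: keep $J$ with probability $q_1:=\frac{e^{\varepsilon_2}-1}{e^{\varepsilon_2}+k}$, or resample uniformly on $\{0,\dots,k\}$ with probability $q_2:=\frac{k+1}{e^{\varepsilon_2}+k}$ (one checks this reproduces the stated conditional law of $\hat J$). Conditioning on which branch occurs, the law of total variance yields
\begin{align*}
\mathrm{Var}(\hat J)=q_1 f(1-f)+q_2\cdot\frac{k(k+2)}{12}+q_1 q_2\,(x-k/2)^2 ,
\end{align*}
using that the uniform law on $\{0,\dots,k\}$ has mean $k/2$ and variance $k(k+2)/12$. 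Since $\hat r=a(\hat J-b)$ with $a=C/(kq_1)$, we have $\mathrm{Var}(\hat r)=a^2\mathrm{Var}(\hat J)$; substituting $a$, using $x-k/2=\frac{k}{C}(r-C/2)$ together with $q_2/q_1=c_1$ and $1/q_1=1+c_1$, this collapses to
\begin{align*}
\mathrm{Var}(\hat r)=\frac{C^2 f(1-f)}{k^2 q_1}+\frac{C^2(k+2)\,q_2}{12\,k\,q_1^2}+c_1\,(r-C/2)^2 .
\end{align*}

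Finally I would close the estimate. Expanding $c_1(r-C/2)^2=c_1(r^2-Cr)+c_1 C^2/4$ and subtracting, the inequality to be proved becomes $\frac{C^2 f(1-f)}{k^2 q_1}+\frac{C^2(k+2)q_2}{12 k q_1^2}+\frac{c_1 C^2}{4}\le c_3$. Bounding $f(1-f)\le 1/4$ and using $1/q_1=1+c_1$ makes the first term at most $(1+c_1)C^2/(4k^2)$, which is exactly the first summand in the definition of $c_3$; after substituting $q_2/q_1^2=\frac{(k+1)(e^{\varepsilon_2}+k)}{(e^{\varepsilon_2}-1)^2}$ and clearing denominators, what remains reduces to the identity $(4k+2)e^{\varepsilon_2}+k^2-k=(4k+2)e^{\varepsilon_2}+k^2-k$, i.e.\ it holds with equality --- which is precisely why $c_3$ was chosen as it is, so the proof is complete. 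I expect the main obstacle to be bookkeeping rather than ideas: carrying the sign-reconstruction argument correctly (the only place the hypothesis on $2b$ enters) and pushing the several algebraic simplifications of $\mathrm{Var}(\hat r)$ and of $c_3$ through without slips; the probabilistic content is just the law of total variance applied to the two-stage ScalarDP mechanism.
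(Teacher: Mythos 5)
Your proof is correct and follows the same skeleton as the paper's: recover the sign of $\hat r$ from the non-integrality of $2b=\frac{k(k+1)}{e^{\varepsilon_2}+k}$, then reduce $E[\hat s]\le r^2$ via $E[\hat r^2]=r^2+\mathrm{Var}(\hat r)$ to the quadratic variance bound $\mathrm{Var}(\hat r)\le c_1 r^2+c_2 r+c_3$. The one genuine difference is in how that bound is established: the paper simply imports it from \citet{bhowmick2018protection}, whereas you rederive it by realizing the randomized response as a mixture (keep $J$ with probability $(e^{\varepsilon_2}-1)/(e^{\varepsilon_2}+k)$, else resample uniformly on $\{0,\dots,k\}$) and applying the law of total variance; your identities ($q_2/q_1=c_1$, $1/q_1=1+c_1$, and the final cancellation leaving $(4k+2)e^{\varepsilon_2}+k^2-k$ on both sides) all check out. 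Your route is more self-contained and has the side benefit of exhibiting exactly where $c_3$ comes from: the bound is tight except for replacing the stochastic-rounding term $f(1-f)$ by $1/4$. The paper's route is shorter but asks the reader to trust the cited constants. Your observation that $b\notin\Z$ forces $\hat r\ne 0$ also cleanly disposes of the boundary case in the sign-recovery step that the paper's case split handles only implicitly. (Minor note: the paper's displayed computation writes $\frac{1}{1+c_2}$ where $\frac{1}{1+c_1}$ is clearly intended; your version carries the correct normalization.)
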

\begin{proof}
    First, we show that $\hat r = \scalardp(\norm{\Delta})$.
    From the definition of $c = \privunit(\Delta / \norm{\Delta})\cdot \scalardp(\norm{\Delta})$ and $\norm{\privunit(\Delta / \norm{\Delta})} = 1 / m$,
    we have $\tilde r = \abs{\scalardp(\norm{\Delta})}$.
    If $\scalardp(\norm{\Delta}) < 0$ and $\tilde J \in \Z$,
    $\scalardp(\norm{\Delta}) = -\tilde r$
    and $\hat J = \scalardp(\norm{\Delta}) / a + b = -\tilde r / a + b \in \Z$.
    This implies $\hat J + \tilde J = 2b = \frac{k(k + 1)}{e^{\varepsilon} + k} \in \Z$, which contradicts the assumption.
    Thus, $\tilde J \notin \Z$ and $\hat r = -\tilde r = \scalardp(\norm{\Delta})$ if $\scalardp(\norm{\Delta}) < 0$.
    On the other hand, if $\scalardp(\norm{\Delta}) \geq 0$, $\tilde J = \tilde r / a + b = \scalardp(\norm{\Delta}) / a + b \in \Z$ and $\hat r = \tilde r = \scalardp(\norm{\Delta})$.
    Combining the above arguments, we have $\hat r = \scalardp(\norm{\Delta})$.

    Next, we show that $E[\hat s] \leq r^2$.
    As shown in~\citet{bhowmick2018protection}, the variance of $\hat r$ is bounded as follows:
    \begin{align*}
        \Var{\hat r} & \leq \frac{k + 1}{e^{\varepsilon_2} - 1} \ab[r^2 + \frac{r_{\max}^2}{4k^2} - r r_{\max} + \frac{(2k + 1)(e^{\varepsilon_2} + k)r_{\max}^2}{6k(e^{\varepsilon_2} - 1)} - \frac{(k + 1)r_{\max}^2}{4(e^{\varepsilon_2} - 1)}] + \frac{r_{\max}^2}{4k^2} \\
                     & = c_1 r^2 + c_2 r + c_3.
    \end{align*}
    Thus, we have
    \begin{align*}
        \Expec{\hat s} & = \Expec{\frac{1}{1 + c_1} (\hat r^2 - c_2 \hat r - c_3)}             \\
                       & = \frac{1}{1 + c_2} \ab(r^2 + \Var{\hat r} - c_2 r - c_3)             \\
                       & \leq \frac{1}{1 + c_2} \ab(r^2 + c_1 r^2 + c_2 r + c_3 - c_2 r - c_3) \\
                       & = r^2.
    \end{align*}
    This completes the proof.
\end{proof}

\begin{lemma}[Properties of PrivUnit and ScalarDP]\label{lemma:properties-privunit}
    Assume that $\varepsilon_1 \in [0, d]$.
    Then, $z = \privunit(u / \norm{u})$ and $\hat r = \scalardp(\norm{u})$ satisfy
    \begin{align*}
        \norm{z}^2   & = O\ab(\frac{d}{\varepsilon_1} \vee \frac{d}{(e^{\varepsilon_1} - 1)^2}), \\
        \abs{\hat r} & = O\ab(\frac{e^{\varepsilon_2}}{e^{\varepsilon_2} - 1} \cdot C),
    \end{align*}
    with probability 1.
\end{lemma}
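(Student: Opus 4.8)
The plan is to treat the two bounds separately; both turn out to hold for \emph{every} realization of the internal randomness of Algorithms~\ref{alg:privunit} and~\ref{alg:scalardp}, which is why the lemma can state them ``with probability $1$''. For $z$, the key observation is that in Algorithm~\ref{alg:privunit} the vector $V$ is drawn uniformly from a subset of the unit sphere $\mathbb{S}^{d-1}$, so $\norm{V} = 1$ surely and hence $\norm{z}^2 = \norm{V/m}^2 = 1/m^2$ is deterministic. It then remains to show $1/m^2 = O\ab(\frac{d}{\varepsilon_1} \vee \frac{d}{(e^{\varepsilon_1}-1)^2})$ for $\varepsilon_1 \in (0,d)$, which I would obtain by plugging the admissible choice of $\gamma$ from Algorithm~\ref{alg:privunit} into the closed form of $m$ (with $\alpha = (d-1)/2$, $\tau = (1+\gamma)/2$, $p = e^{\varepsilon_0}/(1+e^{\varepsilon_0})$) and estimating the incomplete-Beta ratios $B(\tau;\alpha,\alpha)/B(\alpha,\alpha)$ --- exactly the computation behind the variance bound for PrivUnit in~\citet{bhowmick2018protection}. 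A convenient packaging: since $z$ is unbiased for the unit vector $u/\norm{u}$~\citep{bhowmick2018protection}, $\Var{z} = \Expec{\norm{z}^2} - \norm{\Expec{z}}^2 = 1/m^2 - 1$, and~\citet{bhowmick2018protection} bound $\Var{z}$ by $O\ab(\frac{d}{\varepsilon_1} \vee \frac{d}{(e^{\varepsilon_1}-1)^2})$; since $d/\varepsilon_1 \geq 1$ on this range, the additive $1$ is absorbed.

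For $\abs{\hat r}$ the argument is elementary. From Algorithm~\ref{alg:scalardp}, $\hat r = a(\hat J - b)$ with $\hat J \in \{0,\dots,k\}$, $a = \frac{e^{\varepsilon_2}+k}{e^{\varepsilon_2}-1}\cdot\frac{r_{\max}}{k}$, $b = \frac{k(k+1)}{2(e^{\varepsilon_2}+k)}$, $r_{\max} = C$, and $k = e^{\ceil{\varepsilon_2/3}}$. Since $e^{\varepsilon_2} \geq 1$, we get $b \leq \frac{k(k+1)}{2(k+1)} = k/2$, so $\hat J - b \in [-k/2,\, k]$ and therefore $\abs{\hat r} \leq ak = \frac{e^{\varepsilon_2}+k}{e^{\varepsilon_2}-1}C$ for every realization of $\hat J$. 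Finally $k = e^{\ceil{\varepsilon_2/3}} \leq e\cdot e^{\varepsilon_2/3} \leq e\cdot e^{\varepsilon_2}$, hence $e^{\varepsilon_2}+k \leq (1+e)e^{\varepsilon_2}$ and $\abs{\hat r} \leq (1+e)\frac{e^{\varepsilon_2}}{e^{\varepsilon_2}-1}C = O\ab(\frac{e^{\varepsilon_2}}{e^{\varepsilon_2}-1}C)$, which needs only $\varepsilon_2 > 0$.

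The scalar bound above is immediate, so the only substantive step is the deterministic lower bound $m^2 = \Omega\ab(\frac{\varepsilon_1}{d} \wedge \frac{(e^{\varepsilon_1}-1)^2}{d})$: one has to control the incomplete-Beta ratios uniformly across the two admissible regimes for $\gamma$ and verify that $\varepsilon_0$ enters only through a harmless multiplicative constant. Since this asymptotic analysis of the incomplete Beta function is exactly what~\citet{bhowmick2018protection} carry out en route to their privacy--utility tradeoff for PrivUnit, I would invoke their estimate rather than reproduce it, which keeps the proof short. Lemma~\ref{lemma:privunit-unbiased} (unbiasedness of the PrivUnit/ScalarDP product) is available should one prefer the variance-based packaging over a direct manipulation of $m$.
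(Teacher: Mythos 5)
Your proof is correct and takes essentially the same route as the paper's: the direction bound is delegated to the PrivUnit analysis of \citet{bhowmick2018protection} (the paper cites their Proposition~4 directly, while you reach the same estimate via the deterministic identity $\norm{z}^2 = 1/m^2$ and their variance bound, with the additive $1$ absorbed since $d/\varepsilon_1 \ge 1$), and the scalar bound is the same elementary computation $\abs{\hat r} \le a\,\abs{\hat J - b} \le a(k+b) = O\bigl(\tfrac{e^{\varepsilon_2}}{e^{\varepsilon_2}-1}\,C\bigr)$ after substituting $a$, $b$, and $k$. Nothing further is needed.
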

\begin{proof}
    The first inequality follows from Proposition 4 in~\citet{bhowmick2018protection}.

    From the definition of $\hat r$, we have $\abs{\hat r} \leq a \abs{\hat J - b} \leq a (k + b)$.
    Substituting, $k = \ceil{e^{\varepsilon_2/3}}, a = \frac{e^{\varepsilon_2} + k}{e^{\varepsilon_2} - 1} \frac{C}{k}$ and $b = \frac{k(k + 1)}{2(e^{\varepsilon_2} + k)}$,
    we obtain the second inequality.
\end{proof}

\begin{lemma}[Tail bounds for PrivUnit]\label{lemma:tail-privunit}
    Let $z_i = \privunit(u_i / \norm{u_i})$ and $\hat r_i = \scalardp(\norm{u_i})$ for $u_i \in \R^d~(\norm{u_i} \leq C)$
    with $\varepsilon_1, \varepsilon_2 = O(1)$.
    Then, for any $v_i \in \R^d$, we have
    \begin{align*}
        \frac{1}{M} \sum_{i=1}^M \langle \hat r_i \cdot z_i - u_i, v_i \rangle         & = O\ab(\sqrt{\frac{C^2d \sum_{i=1}^M \norm{v_i}^2}{M^2}} \cdot q), \\
        \norm{\frac{1}{M} \sum_{i=1}^M (\hat r_i \cdot z_i - u_i)}^2                   & = O\ab(\frac{d C^2 (1 + q^2)}{M}),                                 \\
        \frac{1}{M} \sum_{i=1}^M \hat s_i - \frac{1}{M} \sum_{i=1}^M \norm{\Delta_i}^2 & = O\ab(C^2 \sqrt{\frac{1}{M}} \cdot q),
    \end{align*}
    with probability at least $1 - e^{-q^2/4}$ for any $q \in (0, \sqrt{M}]$.
\end{lemma}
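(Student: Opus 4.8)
The plan is to recognize all three estimates as concentration inequalities for averages of \emph{independent} summands that are \emph{mean-zero} (for the first two bounds, by unbiasedness of PrivUnit composed with ScalarDP, Lemma~\ref{lemma:privunit-unbiased}) and almost surely bounded. The preliminary step, and essentially the only non-mechanical one, is to record the relevant almost-sure bounds while treating the privacy parameters $\varepsilon_1,\varepsilon_2$ as fixed constants. From Algorithm~\ref{alg:privunit}, the PrivUnit output is a rescaled unit vector, so $\norm{z_i} = 1/m$ deterministically, and $\norm{z_i}^2 = O(d)$ by Lemma~\ref{lemma:properties-privunit}; also $\abs{\hat r_i} = O(C)$ by Lemma~\ref{lemma:properties-privunit}. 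Hence $\norm{\hat r_i z_i - u_i} \leq \abs{\hat r_i}\norm{z_i} + \norm{u_i} = O(C\sqrt d) =: R$. Likewise, from $\hat s_i = (1 + c_1)^{-1}(\hat r_i^2 - c_2 \hat r_i - c_3)$ in Algorithm~\ref{alg:norm-privunit} together with $\abs{\hat r_i} = O(C)$ and $c_1 = O(1), c_2 = O(C), c_3 = O(C^2)$, we get $\abs{\hat s_i} = O(C^2)$ almost surely.

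For the second bound, set $Y_i := \hat r_i z_i - u_i$; these are independent across $i$, satisfy $\Expec{Y_i} = 0$ by Lemma~\ref{lemma:privunit-unbiased} (using $\norm{u_i} \leq C$), and $\norm{Y_i} \leq R$. Plugging into the Vector Bernstein Inequality (Lemma~\ref{lemma:vec-bernstein}) with $n = M$ gives $\norm{\frac{1}{M}\sum_{i=1}^M Y_i} \leq R(1 + q)/\sqrt M$ with probability at least $1 - e^{-q^2/4}$ for $q \in [0, \sqrt M]$; squaring and using $(1 + q)^2 \leq 2(1 + q^2)$ gives the second estimate. For the first bound, fix $v_1,\dots,v_M$ and put $X_i := \langle Y_i, v_i \rangle$; these are independent, mean-zero, and $\abs{X_i} \leq \norm{Y_i}\norm{v_i} = O(C\sqrt d\,\norm{v_i})$. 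Hoeffding's inequality for bounded independent variables yields $\Prob{\frac{1}{M}\sum_{i=1}^M X_i \geq t} \leq \exp(-M^2 t^2 / (c\,C^2 d \sum_{i=1}^M \norm{v_i}^2))$ for a constant $c$ (and symmetrically for $-X_i$), and choosing $t$ a suitable constant multiple of $q\sqrt{C^2 d \sum_{i=1}^M \norm{v_i}^2}/M$ makes the exponent at most $-q^2/4$, giving the first estimate.

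For the third bound, the $\hat s_i$ are independent and $\Expec{\hat s_i} \leq \norm{\Delta_i}^2$ by Lemma~\ref{lemma:privunit-norm}, so $\frac{1}{M}\sum_{i=1}^M \hat s_i - \frac{1}{M}\sum_{i=1}^M \norm{\Delta_i}^2 \leq \frac{1}{M}\sum_{i=1}^M (\hat s_i - \Expec{\hat s_i})$, an average of independent mean-zero variables bounded in absolute value by $O(C^2)$. Hoeffding's inequality gives $\Prob{\frac{1}{M}\sum_{i=1}^M (\hat s_i - \Expec{\hat s_i}) \geq t} \leq \exp(-c' M t^2 / C^4)$, and taking $t$ a constant multiple of $C^2 q/\sqrt M$ yields the bound with probability at least $1 - e^{-q^2/4}$; the reverse deviation, if it is needed, follows identically from the matching lower bound $\Expec{\hat s_i} \geq \norm{\Delta_i}^2 - O(C^2)$ read off from the proof of Lemma~\ref{lemma:privunit-norm}.

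The hard part is really just the first paragraph: pinning down the uniform almost-sure bounds $\norm{z_i} = O(\sqrt d)$, $\abs{\hat r_i} = O(C)$, $\abs{\hat s_i} = O(C^2)$ with the correct joint scaling in $C$ and $d$ (the privacy parameters held fixed) and verifying that the summands are genuinely independent across clients with the claimed means, so that Lemma~\ref{lemma:vec-bernstein} and Hoeffding's inequality apply verbatim. After that, each estimate is a one-line substitution, and the range $q \in (0,\sqrt M]$ is inherited from the Vector Bernstein Inequality.
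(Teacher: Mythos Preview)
Your proposal is correct and follows essentially the same route as the paper: establish the almost-sure bounds $\norm{\hat r_i z_i - u_i} = O(C\sqrt d)$ and $\abs{\hat s_i} = O(C^2)$ via Lemma~\ref{lemma:properties-privunit}, invoke unbiasedness from Lemma~\ref{lemma:privunit-unbiased} and the bias inequality $\Expec{\hat s_i} \leq \norm{\Delta_i}^2$ from Lemma~\ref{lemma:privunit-norm}, then apply Hoeffding for the scalar sums and Lemma~\ref{lemma:vec-bernstein} for the vector sum. The only extra content you add is the remark about the reverse deviation in the third bound, which the paper does not prove (and does not need downstream).
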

\begin{proof}
    From Lemma~\ref{lemma:properties-privunit} and~\ref{lemma:privunit-unbiased},
    we have $\abs{\langle \hat r_i \cdot z_i - u_i, v_i \rangle} \leq \norm{\hat r_i z_i - u_i}\norm{v_i}= O(\sqrt{d} C \norm{v_i})$
    and $\Expec{\langle \hat r_i \cdot z_i - u_i, v_i \rangle} = 0$.
    Thus, from the Hoeffding inequality, we have
    \begin{align*}
        \frac{1}{M} \sum_{i=1}^M \langle \hat r_i \cdot z_i - u_i, v_i \rangle & = O\ab(\sqrt{\frac{d C^2 \sum_{i=1}^M \norm{v_i}^2}{M^2}} \cdot q),
    \end{align*}
    with probability at least $1 - 2e^{-2q^2}$ for any $q > 0$.

    For the second inequality, Lemma~\ref{lemma:properties-privunit} and~\ref{lemma:privunit-unbiased}
    imply $\norm{\hat r_i \cdot z_i - u_i} = O(\sqrt{d} C)$ and $\Expec{\hat r_i \cdot z_i - u_i} = 0$.
    Thus, using the vector Bernstein inequality in Lemma~\ref{lemma:vec-bernstein}, we have
    \begin{align*}
        \norm{\frac{1}{M}\sum_{i=1}^M (\hat r_i \cdot z_i - u_i)} & = O\ab(\sqrt{\frac{d}{M}}C (1 + q)),
    \end{align*}
    with probability at least $1 - e^{-q^2 / 4}$ for $q \in (0, \sqrt{M})$.
    This yields
    \begin{align*}
        \norm{\frac{1}{M}\sum_{i=1}^M (\hat r_i \cdot z_i - u_i)}^2 & = O\ab(\frac{d C^2 (1 + q^2)}{M}).
    \end{align*}

    For the third inequality, from the definition of $\hat s_i$ and Lemma~\ref{lemma:properties-privunit},
    we have
    \begin{align*}
        \abs{\hat s_i} = \abs{\frac{1}{1 + c_1} (\hat r^2 - c_2 \hat r - c_3)} = O(C^2).
    \end{align*}
    Thus, from the Hoeffding inequality, we have
    \begin{align*}
        \frac{1}{M} \sum_{i=1}^M \hat s_i - \frac{1}{M} \sum_{i=1}^M\norm{\Delta_i}^2 \leq \frac{1}{M} \sum_{i=1}^M \hat s_i - \frac{1}{M} \sum_{i=1}^M \Expec{\hat s_i} = O\ab(C^2 q \sqrt{\frac{1}{M}}),
    \end{align*}
    with probability at least $1 - e^{-q^2 / 2}$ for any $q > 0$.
    For the first inequality, we used Lemma~\ref{lemma:privunit-norm}.
\end{proof}

\section{Proofs for Section~\ref{sec:privacy}} \label{appendix:privacy}
The result for the PrivUnit follows from Lemma~\ref{lemma:privunit-unbiased}.

To tightly audit the privacy leakage of the Gaussian mechanism, we adopt the R\'{e}nyi Differential Privacy (RDP)~\cite{mironov2017renyi}.
\begin{definition}[RDP]
    For any $\alpha \in (1, \infty)$ and any $\varepsilon > 0$, a mechanism $M:\mathcal{X} \to \mathcal{Y}$ is said to be (local) $(\alpha, \varepsilon)-RDP$
    if for any inputs $x, x' \in \mathcal{X}$,
    \begin{align*}
        D_{\alpha}(M(x)\mid M(x')) := \frac{1}{\alpha - 1} \log \Expec[\theta \sim M(x')]{\ab(\frac{M(x)(\theta)}{M(x')(\theta)})^\alpha}\leq \varepsilon.
    \end{align*}
\end{definition}

\paragraph{LDP case}
Since the $l^2$-sensitivity of the local computation at each step is bounded by $2C$,
as shown in~\citet{mironov2017renyi}, Gaussian mechanism is $(\alpha, \alpha \rho)$-RDP, where $\rho = 2 C^2/\sigma^2$

The RDP bound can be converted into the $(\epsilon, \delta)$-DP bound using the following lemma:
\begin{lemma}[\citet{mironov2017renyi}]\label{lemma:dp-conversion}
    Let $M$ be $(\alpha, \varepsilon)$-RDP for $\alpha \in (1, \infty)$.
    Then, $M$ is $(\epsilon + \log (1/\delta) / (\alpha - 1), \delta)$-DP for every $\delta \in (0, 1)$.
\end{lemma}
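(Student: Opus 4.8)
The plan is to prove this as the classical conversion from R\'enyi to approximate differential privacy via a Chernoff-type tail bound on the privacy loss. Write $p := M(x)$ and $q := M(x')$ for the output densities of the two neighboring inputs, and set the target level $\varepsilon' := \varepsilon + \log(1/\delta)/(\alpha-1)$. Because the neighboring relation is symmetric and the RDP hypothesis is assumed for all input pairs (so that $D_\alpha(q\,\|\,p) \le \varepsilon$ holds as well), it suffices to establish the one-sided bound $\Pr[M(x)\in S] \le e^{\varepsilon'}\Pr[M(x')\in S] + \delta$ for every measurable $S$; the reverse direction follows by the identical argument with the roles of $p$ and $q$ exchanged.

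First I would recast the RDP hypothesis as an exponential-moment bound on the \emph{privacy loss}. The assumption is $\mathbb{E}_{\theta\sim q}\!\left[(p(\theta)/q(\theta))^\alpha\right] \le e^{(\alpha-1)\varepsilon}$, and a change of measure turns the order-$\alpha$ moment under $q$ into an order-$(\alpha-1)$ moment under $p$:
\[
\mathbb{E}_{\theta\sim q}\!\left[\left(\tfrac{p(\theta)}{q(\theta)}\right)^{\alpha}\right] = \mathbb{E}_{\theta\sim p}\!\left[\left(\tfrac{p(\theta)}{q(\theta)}\right)^{\alpha-1}\right] \le e^{(\alpha-1)\varepsilon}.
\]
Defining the privacy loss $Z := \log\!\big(p(\theta)/q(\theta)\big)$ for $\theta\sim p$, this is exactly $\mathbb{E}_{\theta\sim p}[e^{(\alpha-1)Z}] \le e^{(\alpha-1)\varepsilon}$.

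Next I would apply Markov's inequality to the nonnegative variable $e^{(\alpha-1)Z}$ at the threshold $e^{(\alpha-1)\varepsilon'}$, obtaining $\Pr_{\theta\sim p}[Z > \varepsilon'] \le e^{(\alpha-1)(\varepsilon - \varepsilon')}$. The definition of $\varepsilon'$ is chosen precisely so that $(\alpha-1)(\varepsilon-\varepsilon') = \log\delta$, whence $\Pr_{\theta\sim p}[Z > \varepsilon'] \le \delta$. Finally I would convert this tail bound into the DP inequality by the standard bad-event decomposition: let $B := \{\theta : p(\theta) > e^{\varepsilon'} q(\theta)\} = \{Z > \varepsilon'\}$, so that on $B^c$ we have the pointwise bound $p(\theta) \le e^{\varepsilon'} q(\theta)$. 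Then for any measurable $S$, $p(S) = p(S\cap B) + p(S\cap B^c) \le p(B) + e^{\varepsilon'} q(S) \le \delta + e^{\varepsilon'} q(S)$, which is the claim.

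I expect no deep obstacle here; the result is essentially bookkeeping around two routine tools. The only points requiring care are the change-of-measure step that converts the $q$-moment into a $p$-moment (valid because finite R\'enyi divergence forces $p \ll q$, so the density ratio is well defined $p$-almost everywhere) and the exact calibration of $\varepsilon'$ that makes the Markov bound return precisely $\delta$.
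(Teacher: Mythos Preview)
Your proof is correct and is exactly the standard Chernoff-style argument from \citet{mironov2017renyi}. The paper does not give its own proof of this lemma; it merely states it as a cited result and applies it, so your derivation matches what one finds in the referenced source.
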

Applying this lemma, we obtain the result for the Gaussian mechanism.

\paragraph{CDP case}
The $l^2$-sensitivity of $\bar \Delta^{(t)}$ and $\frac{1}{M}\sum_{i=1}^M \norm{\Delta^{(t)}_i}^2$
are bounded by $2C / M$ and $C^2 / M$, respectively.
Thus, $\bar c^{(t)}$ and $\frac{1}{M}\sum_{i=1}^M \norm{\Delta^{(t)}_i}^2 + \xi^{(t)}$
satisfies $(\alpha, 2\alpha C^2/M\sigma^2)$-RDP and $(\alpha, \frac{\alpha C^4}{2M^2 \sigma_\xi^2})$-RDP, respectively.
Then, the entire training process with $T$ iterations satisfy $(\alpha, \alpha (\rho + \rho_\xi))$-RDP, where $\rho = 2C^2T/M\sigma^2, \rho_\xi = C^4T/2M^2\sigma_\xi^2$.
Applying Lemma~\ref{lemma:dp-conversion} yields Proposition~\ref{prop:cdp}.

\section{Proof for Theorem~\ref{thm:non-convex-ldp} and~\ref{thm:non-convex-cdp}} \label{appendix:utility}

To simplify the notation, let
\begin{align*}
    h_i^{(t)}           & := -\Delta_i^{(t)} / (\eta_l \tau) = \frac{1}{\tau}\sum_{k=0}^{\tau - 1} \grad F_i(w^{(t, k)}_i),                                                                         \\
    \bar h^{(t)}        & := -\bar \Delta^{(t)} / (\eta_l \tau) = \frac{1}{M} \sum h_i^{(t)},                                                                                                       \\
    \bar \epsilon^{(t)} & := -(\bar c^{(t)} - \bar \Delta^{(t)}) / (\eta_l \tau)                                                                                                                    \\
    \delta_s^{(t)}      & := \begin{cases}
                                 \frac{1}{M} \sum_{i=1}^M \norm{c_i^{(t)}}^2 - d\sigma^2 - \frac{1}{M} \sum_{i=1}^M \norm{\Delta^{(t)}_i}^2 & \text{for LDP-FedEXP with Gaussian mechanism}, \\
                                 \frac{1}{M} \sum_{i=1}^M \hat s_i^{(t)} - \frac{1}{M} \sum_{i=1}^M \norm{\Delta^{(t)}_i}^2                 & \text{for LDP-FedEXP with PrivUnit},           \\
                                 \xi^{(t)}                                                                                                  & \text{for CDP-FedEXP}.
                             \end{cases}
\end{align*}
Then, the global step size $\eta_g^{(t)}$ is given by
\begin{align}
    \eta_g^{(t)} & = \max\ab\{1, \frac{\frac{1}{M} \sum_{i=1}^M \norm{h^{(t)}_i}^2 + \delta_s^{(t)} / (\eta_l\tau)^2}{\norm{\bar h^{(t)} + \bar \epsilon^{(t)}}^2}\}. \label{eq:eta-g-by-s}
\end{align}

From the smoothness of $F$, $F(w^{(t + 1)})$ satisfies the following:
\begin{align}
    F(w^{(t+1)}) - F(w^{(t)}) & \leq -\eta_g \eta_l \tau \langle \grad F(w^{(t)}), \bar h^{(t)} + \bar \epsilon^{(t)} \rangle + \frac{(\eta_g^{(t)})^2 \eta_l^2 \tau^2 L}{2} \|\bar h^{(t)} + \bar \epsilon^{(t)}\|^2,  \notag             \\
                              & \leq -\eta_g \eta_l \tau\Bigg[ \langle \grad F(w^{(t)}), \bar h^{(t)} + \bar \epsilon^{(t)} \rangle              \notag                                                                                    \\
                              & \quad - \frac{\eta_l \tau L}{2} \max\ab\{\frac{1}{M} \sum_{i=1}^M \norm{h^{(t)}_i}^2 + \delta_s^{(t)} / (\eta_l\tau)^2, \norm{\bar h^{(t)} + \bar \epsilon^{(t)}}^2\}\Bigg]. \label{eq:descent-inequality}
\end{align}
Here, the second inequality follows from Eq.~\eqref{eq:eta-g-by-s}.

For the right-hand side of Eq.~\eqref{eq:descent-inequality}, we have
\begin{align*}
    \langle \grad F(w^{(t)}), \bar h^{(t)} + \bar \epsilon^{(t)} \rangle & = \langle \grad F(w^{(t)}), \bar h^{(t)} \rangle + \langle \grad F(w^{(t)}), \bar \epsilon^{(t)} \rangle                                                                                 \\
                                                                         & = \frac{1}{2} \ab(\norm{\grad F(w^{(t)})}^2 + \norm{\bar h^{(t)}}^2 - \norm{\grad F(w^{(t)}) - \bar h^{(t)}}^2) + \langle \grad F(w^{(t)}), \bar \epsilon^{(t)} \rangle                  \\
                                                                         & \geq \frac{1}{2} \norm{\grad F(w^{(t)})}^2 - \frac{1}{2} \norm{\grad F(w^{(t)}) - \bar h^{(t)}}^2 - \norm{\grad F(w^{(t)})} \norm{\bar \epsilon^{(t)}}                                   \\
                                                                         & \geq \frac{1}{2} \norm{\grad F(w^{(t)})}^2 - \frac{1}{2} \norm{\grad F(w^{(t)}) - \bar h^{(t)}}^2 - \frac{1}{2}\ab(\frac{1}{2}\norm{\grad F(w^{(t)})}^2 + 2\norm{\bar \epsilon^{(t)}}^2) \\
                                                                         & \geq \frac{1}{4} \norm{\grad F(w^{(t)})}^2 - \frac{1}{2M} \sum_{i=1}^M \norm{\grad F_i(w^{(t)}) - h_i^{(t)}}^2 - \norm{\bar \epsilon^{(t)}}^2,                                           \\
    \norm{\bar h^{(t)} + \bar \epsilon^{(t)}}^2                          & \leq 2\norm{\bar h^{(t)}}^2 + 2 \norm{\bar \epsilon^{(t)}}^2,                                                                                                                            \\
                                                                         & \leq \frac{2}{M}\sum_{i=1}^M \norm{h_i^{(t)}}^2 + 2 \norm{\bar \epsilon^{(t)}}^2.
\end{align*}

Substituting the above inequalities into Eq.~\eqref{eq:descent-inequality}, we have
\begin{align}
    F(w^{(t + 1)}) - F(w^{(t)}) & \leq -\eta_g\eta_l\tau \Bigg[\frac{1}{4} \norm{\grad F(w^{(t)})}^2 - \frac{1}{2M} \sum \norm{\grad F_i(w^{(t)}) - h_i^{(t)}}^2 -\norm{\bar \epsilon^{(t)}}^2  \notag                                                                                         \\
                                & \quad - \frac{\eta_l \tau L}{2} \max\ab\{\frac{1}{M} \sum_{i=1}^M \norm{h^{(t)}_i}^2 + \delta_s^{(t)} / (\eta_l\tau)^2, \frac{2}{M}\sum_{i=1}^M \norm{h_i^{(t)}}^2 + 2 \norm{\bar \epsilon^{(t)}}^2\}\Bigg]                                                  \\
                                & \leq -\eta_g\eta_l\tau \Bigg[\frac{1}{4} \norm{\grad F(w^{(t)})}^2 - \frac{1}{2M} \sum \norm{\grad F_i(w^{(t)}) - h_i^{(t)}}^2 - \eta_l \tau L \cdot \underbrace{\frac{1}{M} \sum_{i=1}^M \norm{h_i^{(t)}}^2}_{:=R} \notag                                   \\
                                & \quad -\underbrace{\ab(\norm{\bar \epsilon^{(t)}}^2 + \frac{\eta_l \tau L}{2} \max\ab\{\frac{\delta_s^{(t)}}{(\eta_l\tau)^2} - \frac{1}{M} \sum_{i=1}^M \norm{h^{(t)}_i}, 2 \norm{\bar \epsilon^{(t)}}^2\})}_{:=T_4} \Bigg]. \label{eq:descent-inequality-2}
\end{align}

As in the proof of Theorem 2 in~\citet{jhunjhunwala2023fedexp}, we have
\begin{align*}
    R & \leq \frac{1}{M} \sum \norm{h_i^{(t)}}^2                                                                                                        \\
      & \leq \frac{1}{M} \sum \norm{h_i^{(t)} - \grad f_i(w^{(t)}) + \grad f_i(w^{(t)}) - \grad F(w^{(t)}) + \grad F(w^{(t)})}^2                        \\
      & \leq \frac{3}{M} \sum \ab(\norm{h_i^{(t)} - \grad f_i(w^{(t)})}^2 + \norm{\grad f_i(w^{(t)}) - \grad F(w^{(t)}) }^2+ \norm{\grad F(w^{(t)})}^2) \\
      & \leq \frac{3}{M} \sum_{i=1}^M \norm{h_i^{(t)} - \grad F_i(w^{(t)})}^2 + 3\norm{\grad F(w^{(t)})}^2 + O(\sigma_g^2).
\end{align*}

Substituting $R$ into~Eq.~\eqref{eq:descent-inequality-2}, we arrive at
\begin{align*}
    F(w^{(t + 1)} - F(w^{(t)}))
     & \leq -\eta_g^{(t)}\eta_l \tau \Bigg[\frac{1}{4} \norm{\grad F(w^{(t)})}^2 - \frac{1}{2M} \sum \norm{\grad F_i(w^{(t)}) - h_i^{(t)}}^2 - \eta_l \tau L \cdot R - T_4\Bigg]                      \\
     & \leq -\eta_g^{(t)}\eta_l \tau \Bigg[\frac{1}{4} \norm{\grad F(w^{(t)})}^2 - \frac{1}{2M} \sum \norm{\grad F_i(w^{(t)}) - h_i^{(t)}}^2 - \underbrace{O(\eta_l \tau L \sigma_g^2)}_{:=T_3} - T_4 \\
     & \quad - \eta_l \tau L \cdot \ab(\frac{3}{M} \sum_{i=1}^M \norm{h_i^{(t)} - \grad F_i(w^{(t)})}^2 + 3\norm{\grad F(w^{(t)})}^2)  \Bigg]                                                         \\
     & \leq -\eta_g^{(t)}\eta_l \tau \Bigg[\frac{1}{8} \norm{\grad F(w^{(t)})}^2 - \frac{\eta_l \tau L}{M} \sum_{i=1}^M \norm{\grad F_i(w^{(t)}) - h_i^{(t)}}^2  - T_3 - T_4 \Bigg]                   \\
     & \leq -\eta_g^{(t)}\eta_l \tau \Bigg[\frac{1}{8} \norm{\grad F(w^{(t)})}^2 - \underbrace{O\ab(\eta_l^2\tau^2 L^2 \sigma_g^2)}_{T_2} - T_3 - T_4 \Bigg].
\end{align*}
Here, we used $\eta_l \leq 1/(24\tau L)$ and Lemma 7 in~\citet{jhunjhunwala2023fedexp}.

Averaging over $T$ iterations, we have
\begin{align*}
    \frac{\sum \eta_g^{(t)} \norm{\grad F(w^{(t)})}^2}{\sum \eta_g^{(t)}} & \leq O\ab(\frac{(F(w^{(0)}) - F^*)}{\sum \eta_g^{(t)}\eta_l \tau} + T_2 + T_3 + T_4),
\end{align*}
which implies
\begin{align*}
    \min \norm{\grad F(w^{(t)})}^2 & \quad \leq O\ab(\frac{F(w^{0}) - F^*}{\sum \eta_g^{(t)}\eta_l \tau} + T_2 + T_3 + T_4).
\end{align*}

The remaining task is to evaluate $T_4$.
Recall that $T_4$ is defined as
\begin{align*}
    T_4
     & = \norm{\bar \epsilon^{(t)}}^2 + \frac{\eta_l \tau L}{2} \max\ab\{\frac{\delta_s^{(t)}}{(\eta_l\tau)^2} - \frac{1}{M} \sum_{i=1}^M \norm{h^{(t)}_i}, 2 \norm{\bar \epsilon^{(t)}}^2\} \\
     & \leq  \ab(1 + \eta_l \tau L) \norm{\bar \epsilon^{(t)}}^2 + \frac{L}{\eta_l \tau} \ab(\delta_s^{(t)} - \frac{1}{M} \sum_{i=1}^M \norm{\Delta_i^{(t)}}^2).
\end{align*}

For LDP-FedEXP with Gaussian mechanism,
Lemma~\ref{lemma:gaussian-tail} and~\ref{lemma:chi-squared-tail} yield
\begin{align*}
    \norm{\bar \epsilon^{(t)}}^2                                           & \leq \frac{d}{(\eta_l \tau)^2} \cdot \ab[1 + q^2] \frac{\sigma^2}{M} = O\ab(\frac{q^2}{(\eta_l \tau)^2} \frac{d \sigma^2}{M}), \\
    \frac{1}{M} \sum_{i=1}^M \norm{\varepsilon^{(t)}_i}^2                  & = d \cdot \ab[1 + \frac{q^2}{\sqrt{Md}}] \sigma^2                                                                              \\
    \frac{1}{M} \sum_{i=1}^M \lrangle{\Delta_i^{(t)}, \varepsilon_i^{(t)}} & \leq q \cdot \ab(\frac{\sigma}{M} \sqrt{\sum_{i=1}^M \norm{\Delta_i^{(t)}}^2})                                                 \\
                                                                           & \leq \frac{1}{2M} \sum_{i=1}^M \norm{\Delta_i^{(t)}}^2 + \frac{q^2 \sigma^2}{2M},
\end{align*}
with probability $1 - T e^{-c \cdot q^2}$ for $q \in [1, \sqrt{M}]$, where $c$ is a numerical constant.
Here, we used the union bound over $t = 1, \dots, T$.
Then, we obtain
\begin{align*}
    \delta_s^{(t)} - \frac{1}{M} \sum_{i=1}^M \norm{\Delta^{(t)}_i}
     & = \frac{1}{M} \sum_{i=1}^M \norm{c_i^{(t)}}^2 - d\sigma^2 - \frac{2}{M} \sum_{i=1}^M \norm{\Delta^{(t)}_i}^2                                                                                    \\
     & = \frac{1}{M} \sum_{i=1}^M \norm{\Delta_i^{(t)} + \varepsilon_i^{(t)}}^2 - d\sigma^2 - \frac{2}{M} \sum_{i=1}^M \norm{\Delta^{(t)}_i}^2                                                         \\
     & = \frac{1}{M} \sum_{i=1}^M \norm{\varepsilon_i^{(t)}}^2 - d\sigma^2 + \frac{2}{M} \sum_{i=1}^M \lrangle{\Delta_i^{(t)}, \varepsilon_i^{(t)}} - \frac{1}{M} \sum_{i=1}^M \norm{\Delta^{(t)}_i}^2 \\
     & = q^2 \cdot \sqrt{\frac{d}{M}} \sigma^2 + \frac{q^2 \sigma^2}{M}.
\end{align*}
Substituting these concentration inequalities, we obtain
\begin{align*}
    T_4 & = O\ab(\ab(1 + \eta_l \tau L) \frac{q^2}{(\eta_l \tau)^2} \frac{d\sigma^2}{M} + \frac{L}{\eta_l \tau} \ab(q^2 \cdot \sqrt{\frac{d}{M}} \sigma^2 + \frac{q^2 \sigma^2}{M})) \\
        & = O\ab(\frac{L\sigma^2q^2}{\eta_l \tau}\ab[\frac{d}{M} + \sqrt{\frac{d}{M}}]),
\end{align*}
since $q \geq 1$ and $\eta_l = \Theta(1/L\tau)$.

For LDP-FedEXP with PrivUnit, Lemma~\ref{lemma:tail-privunit} yields
\begin{align*}
    \delta_s^{(t)} = \frac{1}{M} \sum_{i=1}^M \hat s_i^{(t)} = O(C^2q \sqrt{\frac{1}{M}}), \\
    \norm{\bar \epsilon^{(t)}}^2 = O\ab(\frac{dC^2(1 + q^2)}{M(\eta_l \tau)^2})
\end{align*}
with probability $1 - T e^{-c \cdot q^2}$ for $q \in [1, \sqrt{M}]$, where $c$ is a numerical constant.
Substituting these concentration inequalities, we obtain
\begin{align*}
    T_4 & = O\ab(\ab(1 + \eta_l \tau L) \frac{dC^2(1 + q^2)}{M(\eta_l \tau)^2}) + O\ab(\frac{L}{\eta_l\tau} C^2 q \sqrt{\frac{1}{M}}) \\
        & = O\ab(\frac{L C^2 q^2}{\eta_l \tau} \ab[\frac{d}{M} + \sqrt{\frac{1}{M}}])                                                 \\
        & = O\ab(\frac{L \sigma^2 q^2}{\eta_l \tau} \ab[\frac{d}{M} + \sqrt{\frac{1}{M}}]).
\end{align*}

For CDP-FedEXP, we have
\begin{align*}
    \delta_s^{(t)} = \xi^{(t)}_i = O(q \sigma_\xi), \\
    \norm{\bar \epsilon^{(t)}}^2 = O\ab(\frac{q}{(\eta_l \tau)^2}\frac{d\sigma^2}{M}),
\end{align*}
with probability $1 - T e^{-c \cdot q^2}$ for $q \in [1, \sqrt{M}]$, where $c$ is a numerical constant.
Substituting these concentration inequalities, we obtain
\begin{align*}
    T_4 & = O\ab(\ab(1 + \eta_l \tau L) \frac{q}{(\eta_l \tau)^2} \frac{d\sigma^2}{M} + \frac{L}{\eta_l \tau}q \sigma_\xi) \\
        & = O\ab(\frac{L\sigma^2q^2}{\eta_l \tau}\frac{d}{M}).
\end{align*}

\section{Supplementary Material for Numerical Experiments}\label{appendix:experiments}
Here, we provide additional details and results for the numerical experiments in Section~\ref{sec:experiments}.

\subsection{Detailed Setup}
\paragraph{Hyperparameter Tuning}
We tuned the hyper parameters (local learning rate $\eta_l$ and clipping threshold $C$)
via grid search and select the best hyperparameters which maximize the test accuracy for the realistic dataset or minimize the training loss for the synthetic dataset averaged over the last 5 rounds.
In the synthetic experiment, the grid for $\eta_l$ is $\{0.01, 0.03, 0.1, 0.3, 1\}$ and for $C$ is $\{0.1, 0.3, 1, 3, 10\}$.
In the realistic experiment, the grid for $\eta_l$ is $\{0.0001, 0.0003, 0.001, 0.003, 0.01\}$ and for $C$ is $\{0.1, 0.3, 1, 3, 10\}$.
We summarize the best performing hyperparameters in Table~\ref{table:hyperparameters}.
\begin{table}[h]
    \caption{Best hyperparameters selected via grid search for DP-FedEXP, DP-FedAvg, and DP-SCAFFOLD.}
    \centering
    \begin{tabular}{ccccccccccc}
        \hline
                  &                & \multicolumn{2}{c}{FedEXP} & \multicolumn{2}{c}{FedAvg} & \multicolumn{2}{c}{SCAFFOLD}                        \\
        Dataset   & DP type        & $\eta_l$                   & $C$                        & $\eta_l$                     & $C$ & $\eta_l$ & $C$ \\
        \hline
        Synthetic & LDP (Gaussian) & 0.003                      & 0.3                        & 0.003                        & 3   & 0.003    & 0.3 \\
                  & LDP (PrivUnit) & 0.003                      & 1                          & 0.003                        & 3   & 0.003    & 0.3 \\
                  & CDP            & 0.001                      & 0.3                        & 0.003                        & 3   & 0.001    & 1   \\
        \hline
        MNIST     & LDP (Gaussian) & 0.03                       & 0.1                        & 0.03                         & 0.3 & 0.1      & 0.1 \\
                  & LDP (PrivUnit) & 0.03                       & 0.3                        & 0.03                         & 0.3 & 0.03     & 0.1 \\
                  & CDP            & 0.1                        & 0.3                        & 0.1                          & 1   & 0.1      & 0.3 \\
        \hline
    \end{tabular}
    \label{table:hyperparameters}
\end{table}

\paragraph{Synthetic Dataset}
In principle, we follow a similar procedure in~\citet{li2020federated,jhunjhunwala2023fedexp}.
First, we generate the true model $w^*$ by sampling from the standard normal distribution.
Then, we generate vectors $x_i \in \R^d$ according to $x_i \sim \mathcal{N}(m_i, I_d)$, where $m_i \sim \mathcal{N}(u_i, 1), u_i\sim \mathcal{N}(0, 0.1)$.
The client objective is defined as $f_i(w) := \norm{x_i^\top w - y_i}^2$, where $y_i = x_i^\top w^*$.

\clearpage
\paragraph{Model Architectures}
We summarize the architectures of the models used in the MNIST experiments in Table~\ref{table:models}.

\begin{table}[h]
    \caption{Model architectures used in the experiments.}
    \centering
    \begin{tabular}{|c|c|}
        \hline
        Setting & Model Architecture         \\
        \hline
        CDP     &
        \begin{tabular}{c}
            Convolutional layer (4 filters, 4x4) \\
            Convolutional layer (8 filters, 4x4) \\
            Fully connected layer (128 $\to$ 32) \\
            ReLU activation                      \\
            Fully connected layer (32 $\to$ 10)  \\
            Softmax activation
        \end{tabular} \\
        \hline
        LDP     &
        \begin{tabular}{c}
            Convolutional layer (2 filters, 4x4) \\
            Convolutional layer (1 filters, 4x4) \\
            Fully connected layer (16 $\to$ 10)  \\
            Softmax activation
        \end{tabular} \\
        \hline
    \end{tabular}
    \label{table:models}
\end{table}

\subsection{Additional Results}
Here, we provide additional results omitted in the main text due to space constraints.

\paragraph{Adaptivity in Global Step Size}
Fig.~\ref{fig:eta_g_hist} plots the global step size $\eta_g^{(t)}$ of each algorithm.
Interestingly, in the synthetic experiment, the global step size of DP-FedEXP decreases as the training progresses.
This enables to speed up the training process and to mitigate the effect of the DP noise on the converged model at the same time.
This phenomenon clearly demonstrates the advantage of the adaptive step size in DP-FL.
\begin{figure}[h]
    \centering
    \includegraphics[width=\textwidth]{./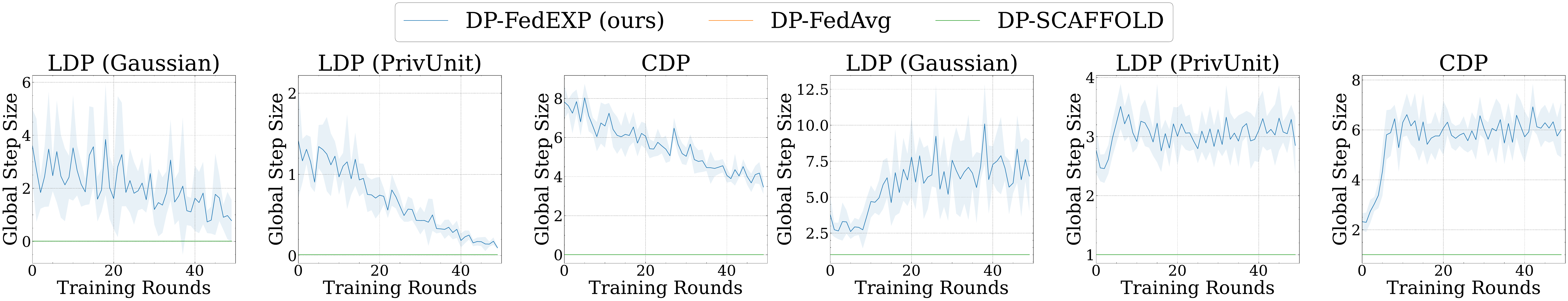}
    \caption{Global step sizes for the synthetic dataset (left) and the MNIST dataset (right).}
    \label{fig:eta_g_hist}
\end{figure}

\paragraph{Additional Results for the MNIST Dataset}
To evaluate the performance of the model at the end of the training process,
we report the test accuracy averaged over the last 5 rounds in Table~\ref{table:test-accuracy}.
Our proposed DP-FedEXP comprehensively outperforms the baselines in all settings.
\begin{table}[h]
    \caption{Test accuracy of algorithms on the MNIST dataset averaged over the last 5 rounds.
        Mean (standard deviation) over 5 runs with different random seeds is reported.}
    \centering
    \begin{tabular}{cccccc}
        \hline
        DP Type        & DP-FedEXP             & DP-FedAvg     & DP-SCAFFOLD  \\
        \hline
        LDP (Gaussian) & \textbf{80.24} (0.94) & 78.69 (1.26)  & 66.89 (2.29) \\
        \hline
        LDP (PrivUnit) & \textbf{79.65} (1.23) & 78.40  (1.18) & 56.83 (3.95) \\
        \hline
        CDP            & \textbf{94.57} (0.19) & 92.88 (0.29)  & 86.61 (0.52) \\
        \hline
    \end{tabular}
    \label{table:test-accuracy}
\end{table}


\end{document}